\newtheorem{definition}{Definition}
\newtheorem{proposition}{Proposition}
\newcommand{\R}{\mathbb{R}}
\newcommand*{\defeq}{\mathrel{\rlap{%
			\raisebox{0.3ex}{$\m@th\cdot$}}%
		\raisebox{-0.3ex}{$\m@th\cdot$}}%
	=}
\DeclareMathSymbol{\shortminus}{\mathbin}{AMSa}{"39}
\definecolor{gray}{rgb}{0.3,0.3,0.3}
\begin{document}

\title{Cluster Purging: Efficient Outlier Detection based on Rate-Distortion Theory}

\author{Maximilian~B.~Toller, 
        Bernhard~C.~Geiger,~\IEEEmembership{Senior Member,~IEEE,}
        and~Roman~Kern 
\IEEEcompsocitemizethanks{\IEEEcompsocthanksitem M.B. Toller and Bernhard C. Geiger are with Know Center GmbH, Graz, Austria;
E-mail: \{mtoller, bgeiger\}@know-center.at

\IEEEcompsocthanksitem Roman Kern is with Graz University of Technology, Graz, Austria;\protect\\ E-mail: rkern@tugraz.at}%
\thanks{Manuscript received 7 Oct. 2020; revised 2 Jun. 2021; accepted 27 July 2021. Date of publication 10 Aug. 2021; Date of current version: 10. Jan 2023.\\ (Corresponding author: Maximilian B. Toller.)\\ Recommended for acceptance by P. Bogdanov\\ Digital Object Identifier no. 10.1109/TKDE.2021.3103571}%
}

\markboth{IEEE TRANSACTIONS ON KNOWLEDGE AND DATA ENGINEERING,~Vol.~35, No.~2, February~2023}{Toller \MakeLowercase{\textit{et al.}}: Cluster Purging: Efficient Outlier Detection based on Rate-Distortion Theory}

\IEEEtitleabstractindextext{%
\begin{abstract}

Rate-distortion theory-based outlier detection builds upon the rationale that a good data compression will encode outliers with unique symbols.
Based on this rationale, we propose Cluster Purging, which is an extension of clustering-based outlier detection.
This extension allows one to assess the representivity of clusterings, and to find data that are best represented by individual unique clusters.
We propose two efficient algorithms for performing Cluster Purging, one being parameter-free, while the other algorithm has a parameter that controls representivity estimations, allowing it to be tuned in supervised setups.
In an experimental evaluation, we show that Cluster Purging improves upon outliers detected from raw clusterings, and that Cluster Purging competes strongly against state-of-the-art alternatives.
\end{abstract}

\begin{IEEEkeywords}
Outlier Detection, Clustering Algorithms, Rate-Distortion Theory
\end{IEEEkeywords}}

\maketitle

\IEEEdisplaynontitleabstractindextext

\IEEEpeerreviewmaketitle

\IEEEraisesectionheading{\section{Introduction}\label{sec:introduction}}
\IEEEPARstart{I}{n} present days, there exists an abundance of datasets containing individual observations that greatly deviate from the remaining observations, commonly called \textit{outliers} or \textit{anomalies}.
The task of finding such outlying/anomalous observations in datasets is relevant in a multitude of applications and has received much attention in the last decades~\cite{chandola2009anomaly}.
Traditionally, outlier detection was mostly approached from a statistical perspective, where data are modeled with distributions, while recently database-oriented methods that focus on efficiency and scalability have become more popular~\cite{zimek2018there}.
A major part of contemporary research concentrates on using  deep learning to detect outliers in semi-supervised~\cite{gornitz2013toward,pang2019deep} or unsupervised~ \cite{chen2017outlier,zenati2018adversarially,ruff2018deep} settings.
These approaches are well motivated for high-dimensional datasets and have yielded significantly improved outlier detection accuracy on benchmark datasets~\cite{lecun2015deep,kwon2017survey,chalapathy2019deep}, yet deep learning techniques are also criticized for being data hungry~\cite{marcus2018deep} and lacking interpretability~\cite{rudin2019stop}.
Both of these deficits gravely affect outlier detection since in many research fields large training datasets are not available~\cite{pang2019deep}.
Further, outlier detection techniques are commonly used in high-risk applications such as intrusion detection~\cite{chandola2009anomaly}, where black-box models should generally be avoided~\cite{rudin2018optimized}.

In contrast, \textit{clustering-based} outlier detection methods~\cite{chandola2009anomaly} resort to very intuitive concepts of what an outlier might possibly be; for instance observations that have abnormal local density~\cite{ester1996density}; or observations that do not fit well into any cluster~\cite{chawla2013k,gan2017k,liu2019clustering}.
A trait that these methods have in common is that they detect outliers during clustering, for instance by assigning outliers to a special outlier cluster.
While this trait can be advantageous in several settings, it also has the downside that outliers are only detected as a ``side-product'' of clustering~\cite{chandola2009anomaly}.
As a consequence, outliers detected by methods such as \cite{ester1996density,chawla2013k,gan2017k,liu2019clustering} are observations that are irregular in the respective clustering, yet not necessarily irregular with respect to the (unclustered) data.

Another type of clustering-based methods infers outliers after the raw data were clustered.
For instance, the Cluster-Based Local Outlier Factor (CBLOF)~\cite{he2003discovering} scales distances between observations and cluster centers by cluster sizes, regardless of which clustering technique was used.
Hence, CBLOF allows one to choose a clustering method that is well-suited for the data at hand.
However, outlier detection techniques such as CBLOF~\cite{he2003discovering,jiang2008clustering,pamula2011outlier} still have the same drawback as the methods mentioned above:
They assume that the computed clustering is sufficient for describing outliers in raw data, which can be problematic in scenarios where it is challenging to perform a good clustering, e.g. in high-dimensional data~\cite{kriegel2009clustering}.

To address this issue, one may resort to information theory.
From an information-theoretic perspective, a clustering is a lossy compression of the raw data~\cite{dhillon2003information}, where a raw observation is represented by the cluster it was assigned to.
The loss (distortion) that occurs during such a clustering-compression can be combined with a cluster's degree of compression (rate) to quantify how well this cluster represents the observations that are assigned to it.
Further, rate-distortion theory allows one to infer how the representivity of a clustering would change if one were to modify this clustering, and which observations would be better represented by different clusters (cf. \cite{blahut1972computation,arimoto1972algorithm}).
Observations that are hard to represent by a meaningful cluster and that are best represented by themselves can then be considered as outliers.

This description outlines a technique that we refer to as \textit{Cluster Purging}, in analogy to the act of purging in authoritarian political systems where deviating individuals that are not well-represented by such systems are removed from society\footnote{None of the authors or their affiliations approve of political purges in any form.}.
In short, Cluster Purging is performed by modifying a clustering (or by analyzing a set of given clusterings), and then isolating observations that are not represented well by their cluster, regardless of how one modifies it (or which of the clusterings one considers).
As such, Cluster Purging is, to the best of our knowledge, a conceptually novel approach to cluster-based outlier detection, and the main contributions of this work stem from it:
\begin{itemize}
	\item Review of related work, outlining the differences between Cluster Purging and existing methods \linebreak(Section~\ref{sec:related_work}).
	\item Theoretical formalization of Cluster Purging and description of required concepts from information theory (Section~\ref{sec:theory}).
	\item Description of a parameter-free algorithm for Cluster Purging and discussion of various aspects that are relevant in practice, i.e. efficiency, interpretation of proposed outliers, how one can introduce parameters for improved performance, and limitations \linebreak(Section~\ref{sec:practice}).
	\item Empirical demonstration that Cluster Purging improves upon outliers detected from clustering alone, and that Cluster Purging strongly competes against state-of-the-art alternatives (Section~\ref{sec:experiments}).
\end{itemize}

\section{Related Work}\label{sec:related_work}
In general, cluster-based outlier detection techniques can be split into three categories depending on how they define outliers~\cite{chandola2009anomaly}:
\begin{enumerate}
	\item Outliers are observations that do not fit into any cluster.\label{cat:1}
	\item Outliers are far away from their cluster's centroid.\label{cat:2}
	\item Outliers are assigned to small or sparse clusters.
\end{enumerate}
Conceptually, category~\ref{cat:1} is most closely related to Cluster Purging, since in our method outliers are observations that cannot be represented well by any cluster.
There are several existing methods that fall into category~\ref{cat:1}, for instance Density-Based Spatial Clustering of Applications with Noise (DBSCAN)~\cite{ester1996density}, extensions of DBSCAN such as~\cite{ruiz2007c,smiti2013soft}, k-Means$\shortminus\shortminus$~\cite{chawla2013k} and k-Means with Outlier Removal~\cite{gan2017k}.
However, a key difference between these methods and Cluster Purging is that our method is not bound to a specific clustering.
Even if one bases Cluster Purging on one of the above clusterings, the results can be very different since our method does not assume that a single clustering necessarily describes outliers in the raw data.

Surprisingly, one can argue that our method should also fall into category~\ref{cat:2}, since the theoretical formulation of Cluster Purging permits setups where outliers are observations that are far away from a centroid (see Section~\ref{sec:theory}).
Related methods from this category are techniques that combine centroid-based clusterings with a distance threshold, for instance~\cite{he2002outlier,pamula2011outlier}.
One can distinguish Cluster Purging from these methods by the simple fact that our method does not require a distance threshold (although Cluster Purging can be adapted to require one, should an application demand this (see Section~\ref{sec:practice})).

Typical methods of the third category are Local Outlier Factor~\cite{breunig2000lof} and its numerous variants, e.g.~\cite{papadimitriou2003loci,duan2007local,kriegel2009loop}.
The Cluster-Based Local Outlier Factor (CBLOF)~\cite{he2003discovering} is particularly noteworthy, since this method is directly applicable to any clustering, similar to Cluster Purging.
The main difference between CBLOF and Cluster Purging is that, while our method can be based on local densities, it does not require a threshold parameter to infer critical differences in local densities and does not consider a single clustering as sufficient for describing outliers.

From a theoretical perspective, the most closely related method to ours is the one-class rate-distortion model (OCRD)~\cite{crammer2008rate}.
The brief description of Cluster Purging given above can be seen as a single (half-)step of the Blahut-Arimoto algorithm~\cite{blahut1972computation,arimoto1972algorithm,cover2006elements}, which OCRD adapts for one-class classification.
However, while OCRD is optimal in a rate-distortion theoretic sense, we here do not aim for this optimality.
Instead, Cluster Purging supports arbitrary clustering techniques, allowing for a greater flexibility.
In our experiments, we demonstrate that rate-distortion optimal clusterings are not necessarily optimal for detecting outliers in real data (Section~\ref{sec:experiments}).

\section{Theoretical Formulation}\label{sec:theory}
In this section, the theoretical background of Cluster Purging is explained and the concept of representivity is introduced.
In short, clustering can be interpreted as a form of data compression that yields cluster assignments and a representation.
One can measure how representative such a representation is via its surplus complexity when compared to the most representative clustering at a given inaccuracy.
Since directly finding the most representative clustering is often infeasible, we show how representivity can be efficiently estimated from a small set of available clusterings.
Finally, we show how one can detect outliers under the premise that a good clustering would represent outliers by themselves, i.e. with an additional cluster.
\subsection{Background}
\subsubsection{Data Compression}
Let $\bm{x} = \{x_1,\mathellipsis,x_n\}$ be a dataset of $n$ observations in $\R^d$ consisting of $u\approx n$ unique values.
A common data analysis goal is to obtain a representation of $\bm{x}$ that has fewer unique values without losing too much information~\cite{yi2000fast,lin2003symbolic,keogh2001dimensionality}.
In coding theory, the task of finding such a representation  consisting of $\nu \ll u$ unique symbols is referred to as lossy data compression.
Clustering can be seen as a typical example for lossy data compression.
In detail, a successful compression via (non-fuzzy) clustering yields two objects\begin{enumerate}
	\item A list of $n$ cluster assignments $\bm{c} = (c_1,\mathellipsis,c_n)$, where $c_j\in1,\mathellipsis,\nu$  is the index of the cluster that contains observation $x_j$.
	\item A low-dimensional representation $\bm{r}=(r_1,\mathellipsis,r_\nu)$ describing $\nu$ different clusters.
\end{enumerate}
A visualization can be seen in Fig.~\ref{fig:cluster_to_symbol}.
\begin{figure}[!t]
	\centering
	\includegraphics[scale=0.75]{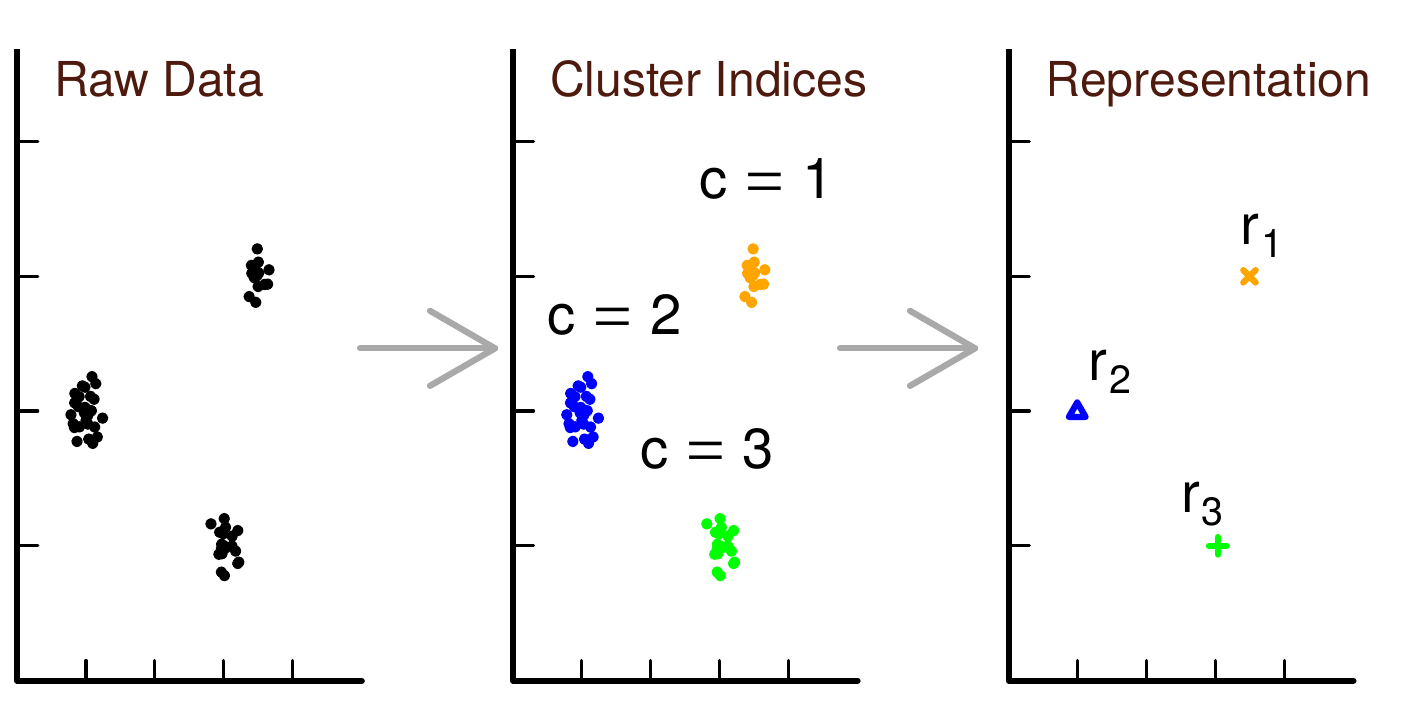}
	\caption{Compression via $k$-means clustering. \textit{Left}: A dataset consisting of 65 observations. \textit{Middle}: Cluster assignments, indicated by color. \textit{Right}: Symbols representing each cluster.}
	\label{fig:cluster_to_symbol}
\end{figure}
Not all clustering techniques return both of these objects, e.g. DBSCAN only gives cluster assignments $\bm{c}$ yet no representation $\bm{r}$.
Details on how to obtain representations in such cases are given in Section~\ref{sec:practice_perturbation}.

Further, assume that a small subset of outliers $\bm{x_y} = \{x_{y_1},\mathellipsis,x_{y_m}\}$ with $ m \ll n$ is part of the dataset.
Since outliers are commonly assumed to deviate significantly from the remaining observations~\cite{grubbs1969procedures}, compressing a dataset that contains outliers will either require additional unique symbols for outliers or else lead to a less effective compression~\cite{bohm2009coco}.
Let
\begin{equation}
	d(\bm{x},\bm{r})=\sum_{j=1}^{n}d(x_j,r_{c_j})\label{eq:distortion}
\end{equation} be a separable distortion function, i.e. a measure describing how accurately $\bm{r}$ represents dataset $\bm{x}$.
If an outlier is represented by the same symbol as an inlier, then this will increase the overall distortion since inliers and outliers are assumed to be dissimilar.
Consequently, one can reduce the overall distortion by compressing outliers to unique symbols.
In the context of clusterings, this translates to assigning outliers to singleton clusters, i.e. an additional cluster that only contains $x_{y_j}$.
However, adding unique outlier clusters also increases the overall \textit{complexity} of the compression.

\subsubsection{The Empirical Rate-Distortion Function}

Rate-distortion theory seeks to describe this trade-off between representation complexity (\textit{rate}) and inaccuracy (\textit{distortion}) in the context of random variables.
Formally, the rate-distortion function $R(D)$ of a random variable $X$ is defined as (cf.~\cite{cover2006elements})
\begin{equation}\label{eq:RD}
R(D) = \min_{P(\hat{X}|X)} H(\hat{X}) - H(\hat{X}|X)\; \textbf{subject to} \; d(X,\hat{X}) \le D
\end{equation}
where $P(\cdot)$ and $H(\cdot)$ are the probability and entropy functions, respectively, $\hat{X}$ is a stochastic compression of $X$, and $D$ is a specific distortion value, e.g. the sum of squared errors in a $k$-means clustering.
Intuitively, the rate-distortion function describes the smallest complexity one can achieve while compressing $X$ at a given distortion, regardless of how the compression is performed.

To transfer this stochastic definition into a real-data context, let 
\begin{equation}\label{eq:entropy}
	h(\bm{c}) = -\sum_{f\in\bm{f}^{\bm{c}}} \frac{f}{n}\log\frac{f}{n}.
\end{equation}
be the empirical counterpart to the theoretical entropy $H(\hat{X})$ as per \cite{cover2006elements}, where $\bm{f}^{\bm{c}}=\{f^{\bm{c}}_{1},\mathellipsis,f^{\bm{c}}_{\nu}\}$ are the numbers of observations assigned to each cluster.
Then, inspired by~\eqref{eq:RD}, we define the empirical rate-distortion function of a dataset $\bm{x}$ as
\begin{equation}\label{eq:RD_ds}
R(D,\bm{x},C) \defeq \min_{\{C(\bm{x},\bm{\theta}):\bm{\theta}\in\bm{\Theta}\}} h(\bm{c}) \;\; \textbf{subject to} \; \; d(\bm{x},\bm{r}) \le D
\end{equation}
with $C(\bm{x},\bm{\theta})=(\bm{c},\bm{r})$, where $C(\cdot)$ is a deterministic compression function (i.e. a non-fuzzy clustering technique) and $\bm{\theta}$ are its parameters and where $\bm{\Theta}$ is the set of all possible parametrizations.
Intuitively, the empirical rate-distortion function can be seen as the strongest degree of compression one can achieve on a dataset with a fixed compression method without exceeding the required distortion.
\begin{figure}[!t]
	\includegraphics[scale=0.75]{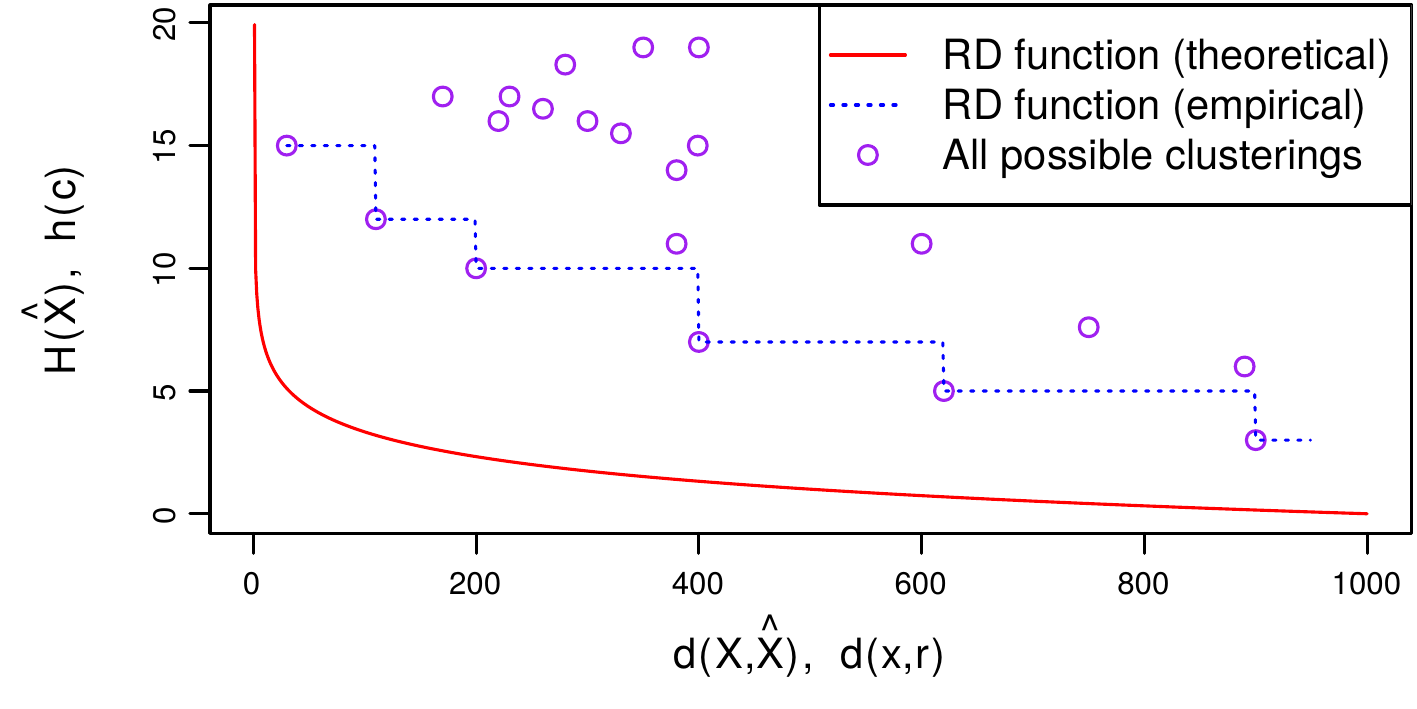}
	\caption{Comparison of theoretical and empirical rate-distortion functions.}
	\label{fig:rd_funcs}
\end{figure}
As such, it describes the trade-off between compression complexity and inaccuracy for a fixed dataset and a specific clustering method.
The term $h(\bm{c}|\bm{x})$ was omitted from \eqref{eq:RD_ds}, since $h(\bm{c}|\bm{x})=0$ for all non-fuzzy clustering techniques.
A visualization of theoretical and empirical rate-distortion functions is depicted in Fig.~\ref{fig:rd_funcs}.

\subsection{Measuring Cluster Representivity}
\subsubsection{Theoretical Representivity}
From a rate-distortion theoretical perspective, there are two quantities that measure how ``good'' a clustering $(\bm{c},\bm{r})$ represents the raw data
\begin{enumerate}
	\item The degree of compression (the rate), computed via entropy $h(\bm{c})$;
	\item How accurate the representation is (the distortion), computed via distortion $d(\bm{x},\bm{r})$.
\end{enumerate}
While the empirical rate-distortion function $R(D,\bm{x},C)$ describes the best achievable trade-off between these quantities in a given setup, the average result of a clustering algorithm typically offers a worse trade-off.
More concretely, for every clustering $C(\bm{x},\bm{\theta})=(\bm{c},\bm{r})$ it holds that
\begin{equation}\label{eq:ineq}
	R(d(\bm{x},\bm{r}),\bm{x},C) \le h(\bm{c})
\end{equation}
since the rate-distortion function describes the global minimum over all parametrizations, i.e the best achievable representation at distortion $d(\bm{x},\bm{r})$.
Due to this inequality there is always a nonnegative surplus complexity between $(\bm{c},\bm{r})$ and \eqref{eq:RD_ds}.
Thus, one can measure the theoretical representivity of a clustering via \begin{equation}
	\rho(\bm{x},\bm{c},\bm{r},C) \defeq  R(d(\bm{x},\bm{r}),\bm{x},C) \:/\: h(\bm{c}).
\end{equation}
However, computing $R(d(\bm{x},\bm{r}),\bm{x},C)$ and thus $\rho(\bm{x},\bm{c},\bm{r},C)$ is infeasible for many clustering techniques, since this would require one to compute $C(\bm{x},\bm{\theta})$ for all possible clustering parameters $\bm{\theta}$.
Therefore, it is more practical to estimate clustering representivity relative to a small set of representations, obtained from parametrizations $\{\bm{\theta}_1,\mathellipsis,\bm{\theta}_t\}$. 
We refer to this estimate as rate-distortion \textit{hull}.

\begin{definition}\label{def:rd_hull}
	\textit{Rate-distortion hull}.
	Let $\underline{\bm{c}} = (\bm{c}_1,\mathellipsis,\bm{c}_t)$ and $\underline{\bm{r}} = (\bm{r}_1,\mathellipsis,\bm{r}_t)$ be a set of clustering assignments and representations, respectively, obtained by evaluating clustering technique $C(\cdot)$ on dataset $\bm{x}$ with parametrizations $\{\bm{\theta}_1,\mathellipsis,\bm{\theta}_t\}$.
	Further, let $\bm{v} = [v_1,\mathellipsis,v_s]$ be the indices of the lower convex hull of the arising distortion-entropy pairs $\{[d(\bm{x},\bm{r}_1),h(\bm{c}_1)],\mathellipsis,[d(\bm{x},\bm{r}_t),h(\bm{c}_t)]\}$.
	Then, the rate-distortion hull of $\underline{\bm{c}}$ and $\underline{\bm{r}}$ is given by
	\begin{align}
	&\mathcal{L}(D,\underline{\bm{c}},\underline{\bm{r}}) \defeq \kappa_i D + \delta_i \quad \forall i \in \{2,\mathellipsis,s\} \label{eq:rd_curve}\\ &D \in [d(\bm{x},\bm{r}_{v_{1}}),d(\bm{x},\bm{r}_{v_s})]  \nonumber
	\end{align}
	where 
	\begin{equation}\label{eq:k}
	\kappa_i = \frac{h(\bm{c}_{v_i})-h(\bm{c}_{v_{i-1}})}{d(\bm{x},\bm{r}_{v_i})-d(\bm{x},\bm{r}_{v_{i-1}})} \end{equation}
	and
	\begin{equation}\label{eq:delta}
	 \delta_i = h(\bm{c}_{v_{i}}) -\kappa_i\cdot d(\bm{x},\bm{r}_{v_i}) \end{equation} are the slopes and vertical intercepts of the arising linear pieces, with $d(\bm{x},\bm{r}_{v_1}) < \cdots < d(\bm{x},\bm{r}_{v_s})$.
\end{definition}
Intuitively, a rate-distortion hull is a linear interpolation of the lower convex hull of the entropy and distortion values associated with observed clusterings $(\underline{\bm{c}},\underline{\bm{r}})$.
A visualization of a rate-distortion hull is shown in Fig.~\ref{fig:rd_convex}.

Further, since $\mathcal{L}(\cdot,\underline{\bm{c}},\underline{\bm{r}}) = \mathcal{L}(\cdot,\underline{\bm{c}}_{\bm{v}},\underline{\bm{r}}_{\bm{v}})$, we assume without loss of generality that $v_i=i$ and $s=t$ to keep the notation simple.

\begin{figure}[!t]
	\includegraphics[scale=0.75]{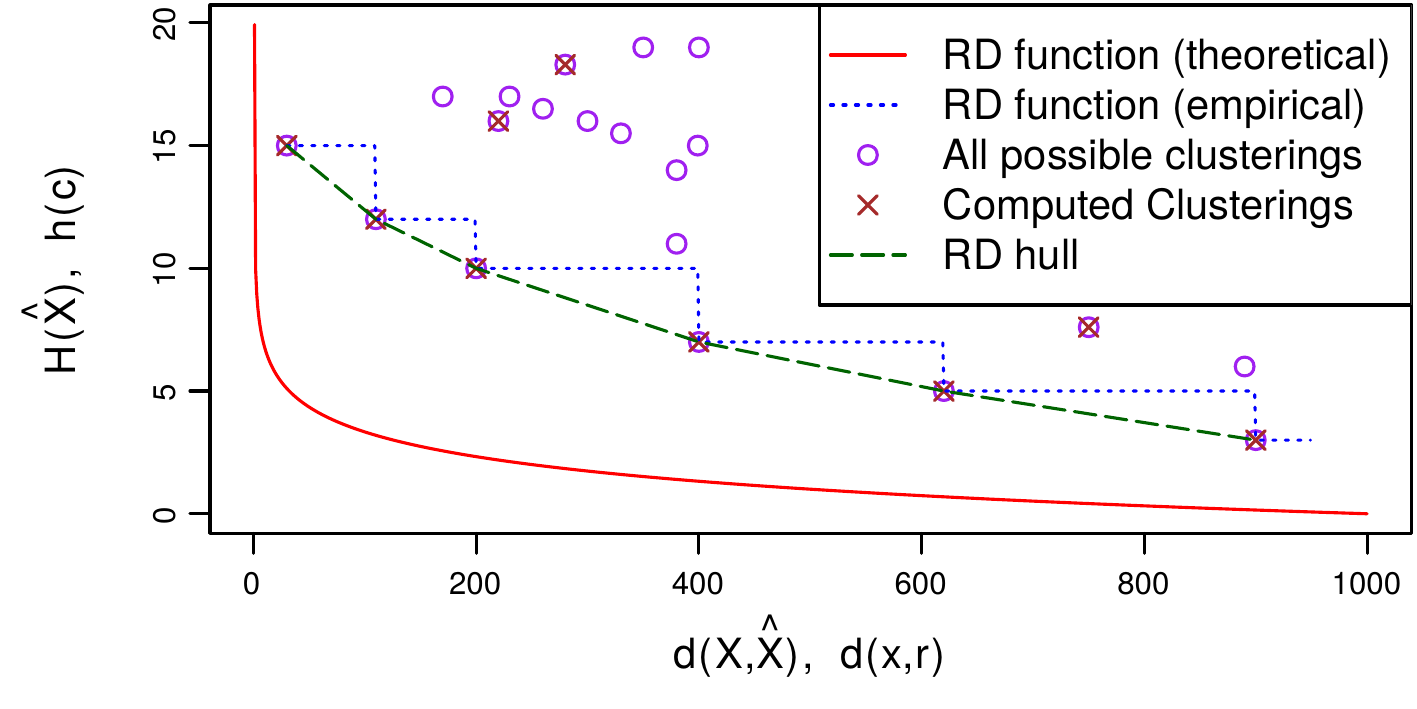}
	\caption{Comparison of theoretical rate-distortion function, empirical rate-distortion function and rate-distortion hull. If ideal clusterings are selected for estimating the empirical rate-distortion function, then the resulting rate-distortion hull is equal to the lower convex hull of the empirical rate-distortion function.}
	\label{fig:rd_convex}
\end{figure}

\subsubsection{Representivity after Modification}
Naturally, it is not possible to directly estimate the theoretical representivity of clusterings $(\underline{\bm{c}},\underline{\bm{r}})$ based on a rate-distortion hull $\mathcal{L}(\cdot,\underline{\bm{c}},\underline{\bm{r}})$ constructed from the same clusterings.
However, one can use $\mathcal{L}(\cdot,\underline{\bm{c}},\underline{\bm{r}})$ for estimating how the representivity of a particular clustering $(\bm{c}_i,\bm{r}_{i})\in(\underline{\bm{c}},\underline{\bm{r}})$ reacts to arbitrary modifications via

\begin{equation}
	\hat{\rho}(\bm{x},\bm{c}_{i}',\bm{r}_{i}',\underline{\bm{c}},\underline{\bm{r}}) \defeq \mathcal{L}(d(\bm{x},\bm{r}_{i}'),\underline{\bm{c}},\underline{\bm{r}}) \:/\: h(\bm{c}_{i}') 
\end{equation}
where $\bm{c}_i'$ and $\bm{r}_{i}'$ are arbitrarily modified versions of $\bm{c}_i$ and $\bm{r}_{i}$ respectively, with \[\bm{c}_{i}'\notin\underline{\bm{c}} \quad\text{and}\quad\bm{r}_{i}'\notin\underline{\bm{r}}.\]
Note that the error between measurements $\hat{\rho}(\bm{x},\bm{c}_i',\bm{r}_{i}',\underline{\bm{c}},\underline{\bm{r}})$ and $\rho(\bm{x},\bm{c}_i',\bm{r}_{i}',C)$ will not only depend on the clusterings used for constructing the rate-distortion hull.
It will also depend on how many $c\in\bm{c}_i$ and $r \in\bm{r}_i$ were modified.
Generally speaking, the more similar modified clustering $(\bm{c}_i',\bm{r}_{i}')$ is to $(\bm{c}_i,\bm{r}_{i})$, the smaller the error between $\hat{\rho}(\bm{x},\bm{c}_i',\bm{r}_{i}',\underline{\bm{c}},\underline{\bm{r}})$ and $\rho(\bm{x},\bm{c}_i',\bm{r}_{i}',C)$ will be.

\subsection{Detecting Outliers with Cluster Representivity}
\subsubsection{Definition of Rate-Distortion Outliers}
Since $\hat{\rho}(\bm{x},\cdot,\cdot,\underline{\bm{c}},\underline{\bm{r}})$ allows one to measure the effect of arbitrary modifications to a clustering, one can also measure how assigning an individual observation to a new, unique cluster would affect representivity.
Now recall from above that an outlier is an observation that will likely need a unique symbol for an effective compression~\cite{bohm2009coco}.
If changing the cluster assignment of observation $x_j$ in $\bm{c}_i$ to a new additional cluster would improve $\bm{r}_i$'s representivity, then $x_j$ should be labeled as outlier.
This intuition can be formalized as follows.

\begin{definition}\label{def:rd_outlier}
	Rate-distortion outlier.
	Let $\bm{x}$ be a dataset and $(\underline{\bm{c}},\underline{\bm{r}})$ a set of clusterings.
	Then observation $x_j$ is a rate-distortion outlier if
	\begin{equation}
		\hat{\rho}\left(\bm{x},\bm{c}'_{(i,j)},\bm{r}'_{(i,j)},\underline{\bm{c}},\underline{\bm{r}}\right) \ge 1 \quad \forall i\in[2\mathellipsis,t]
	\end{equation}
	with
	\begin{equation}\label{eq:cdash}
		\bm{c}'_{(i,j)} = (c_{i,1},\mathellipsis,c_{i,j\shortminus1},\nu+1,c_{i,j+1},\mathellipsis,c_{i,n})
	\end{equation}
	and
	\begin{equation}\label{eq:rdash}
		\bm{r}'_{(i,j)} = (r_{i,1},\mathellipsis,r_{i,\nu},r^\star)
	\end{equation}
	where $r^\star$ is a representation of $x_j$ such that $d(x_j,r^\star)=0$.
		
\end{definition}
In simple terms, Definition~\ref{def:rd_outlier} states that $x_j$ is a rate-distortion outlier if assigning it to $r^\star$ would improve the representivity of all clusterings $(\underline{\bm{c}},\underline{\bm{r}})$.

\subsubsection{Computation of $\hat{\rho}(\bm{x},\bm{c}'_{(i,j)},\bm{r}'_{(i,j)},\underline{\bm{c}},\underline{\bm{r}})$}\label{sec:ed_computation}
A key advantage of defining outliers as in Definition~\ref{def:rd_outlier} is that  $\hat{\rho}(\bm{x},\cdot,\cdot,\underline{\bm{c}},\underline{\bm{r}})$ can be computed for $\bm{c}'_{(i,j)}$ and $\bm{r}'_{(i,j)}$ from a set of clusterings $(\underline{\bm{c}},\underline{\bm{r}})$ in $\mathcal{O}(n)$ time.
This works, since the change in entropy from $\bm{c}_i$ to $\bm{c}'_{(i,j)}$ and the change in distortion from $\bm{r}_i$ to $\bm{r}'_{(i,j)}$ can be computed independently from the remaining clusterings in $(\underline{\bm{c}},\underline{\bm{r}})$.
\begin{proposition}
	Let $\bm{c}$ be a list of cluster assignments and let  $\bm{f}^{\bm{c}}=\{f^{\bm{c}}_{1},\mathellipsis,f^{\bm{c}}_{\nu}\}$ be the numbers of observations assigned to each cluster.
	Then the change in entropy  caused by assigning $x_j$ to an additional unique cluster, yielding $\bm{c}'$, depends only on $f^{\bm{c}}_{c_j}$ and is given by 
	\begin{equation}\label{eq:change_e}
		h(\bm{c}') \shortminus h(\bm{c}) = \frac{1}{n} \left(f^{\bm{c}}_{c_j} \log f^{\bm{c}}_{c_j} \shortminus (f^{\bm{c}}_{c_j}\shortminus 1)\log(f^{\bm{c}}_{c_j}\shortminus 1)\right).
	\end{equation}
\end{proposition}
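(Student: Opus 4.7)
The statement is a direct computation, so the plan is to expand $h(\bm{c}')-h(\bm{c})$ from the definition in \eqref{eq:entropy} and cancel every term that does not involve the modified cluster. The key observation is that moving $x_j$ from cluster $c_j$ to a new singleton cluster changes exactly two frequencies: $f^{\bm{c}}_{c_j}$ decreases by one, and a new frequency $f^{\bm{c}'}_{\nu+1}=1$ appears. All remaining $f^{\bm{c}}_k$ with $k\neq c_j$ carry over unchanged into $\bm{f}^{\bm{c}'}$, and the corresponding summands in \eqref{eq:entropy} therefore cancel pairwise in the difference.

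Writing $f \defeq f^{\bm{c}}_{c_j}$ for brevity, I would next split the surviving logarithms via $\log(a/n)=\log a-\log n$ so that the $\log n$ contributions can be tracked explicitly. The cluster $c_j$ contributes $-\tfrac{f}{n}\log f+\tfrac{f}{n}\log n$ to $h(\bm{c})$, while in $h(\bm{c}')$ the same cluster (now of size $f-1$) together with the new singleton contributes $-\tfrac{f-1}{n}\log(f-1)+\tfrac{f-1}{n}\log n+\tfrac{1}{n}\log n$. The $\log n$ terms collapse to $\tfrac{f}{n}\log n$ on both sides and cancel, leaving precisely
\begin{equation*}
h(\bm{c}')-h(\bm{c})=\tfrac{1}{n}\bigl(f\log f-(f-1)\log(f-1)\bigr),
\end{equation*}
which is the claimed identity.

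\textbf{Main obstacle.} There is no real obstacle, only a minor boundary case to address for rigor: when $f=1$, the observation $x_j$ already sits in a singleton, and the new cluster merely replaces the old one. Using the standard convention $0\log 0=0$ one obtains $h(\bm{c}')-h(\bm{c})=0$, which is consistent with the formula since $f\log f=0$ and $(f-1)\log(f-1)=0$ in that case. I would mention this explicitly at the end so that the derivation covers every possible value of $f^{\bm{c}}_{c_j}\in\{1,\dots,n\}$, and note that the computation touches only the two frequencies associated with $x_j$, which is what justifies the $\mathcal{O}(n)$ complexity claim made in Section~\ref{sec:ed_computation}.
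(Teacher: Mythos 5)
Your proposal is correct and follows essentially the same route as the paper's own proof: both split $\log\frac{f}{n}=\log f-\log n$, cancel the contributions of all unchanged clusters, and use the fact that the new singleton contributes $1\log 1=0$, so that only the $f^{\bm{c}}_{c_j}$ terms survive. Your explicit handling of the boundary case $f^{\bm{c}}_{c_j}=1$ via the convention $0\log 0=0$ is a small rigor bonus the paper omits, but it does not change the argument.
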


\begin{proof}
	The entropy of $\bm{c}$ as given in~\eqref{eq:entropy} can be rewritten as 
	\begin{equation}\label{eq:h_r}
		h(\bm{c})=\log n - \frac{1}{n}\sum_{f\in\bm{f}^{\bm{c}}} f \log f
	\end{equation}
	since $\log\frac{f}{n} = \log f - \log n$.
	The entropy of $\bm{c}'$ is given by 
	\begin{equation}\label{eq:h_r'}
		h(\bm{c}') = \log n - \frac{1}{n}\sum_{f\neq f^{\bm{c}}_{c_j}} f \log f \;\; -  \frac{1}{n}\left((f^{\bm{c}}_{c_j}\shortminus 1) \log (f^{\bm{c}}_{c_j} \shortminus 1)\right)
	\end{equation}
	since $1$ observation is removed from cluster $c_j$ and a unique cluster is added with entropy $1 \log 1 = 0$.
	Subtracting \eqref{eq:h_r} from \eqref{eq:h_r'} yields \eqref{eq:change_e}.
\end{proof}

\begin{figure*}[!t]
	\centering
	\includegraphics[scale=0.725]{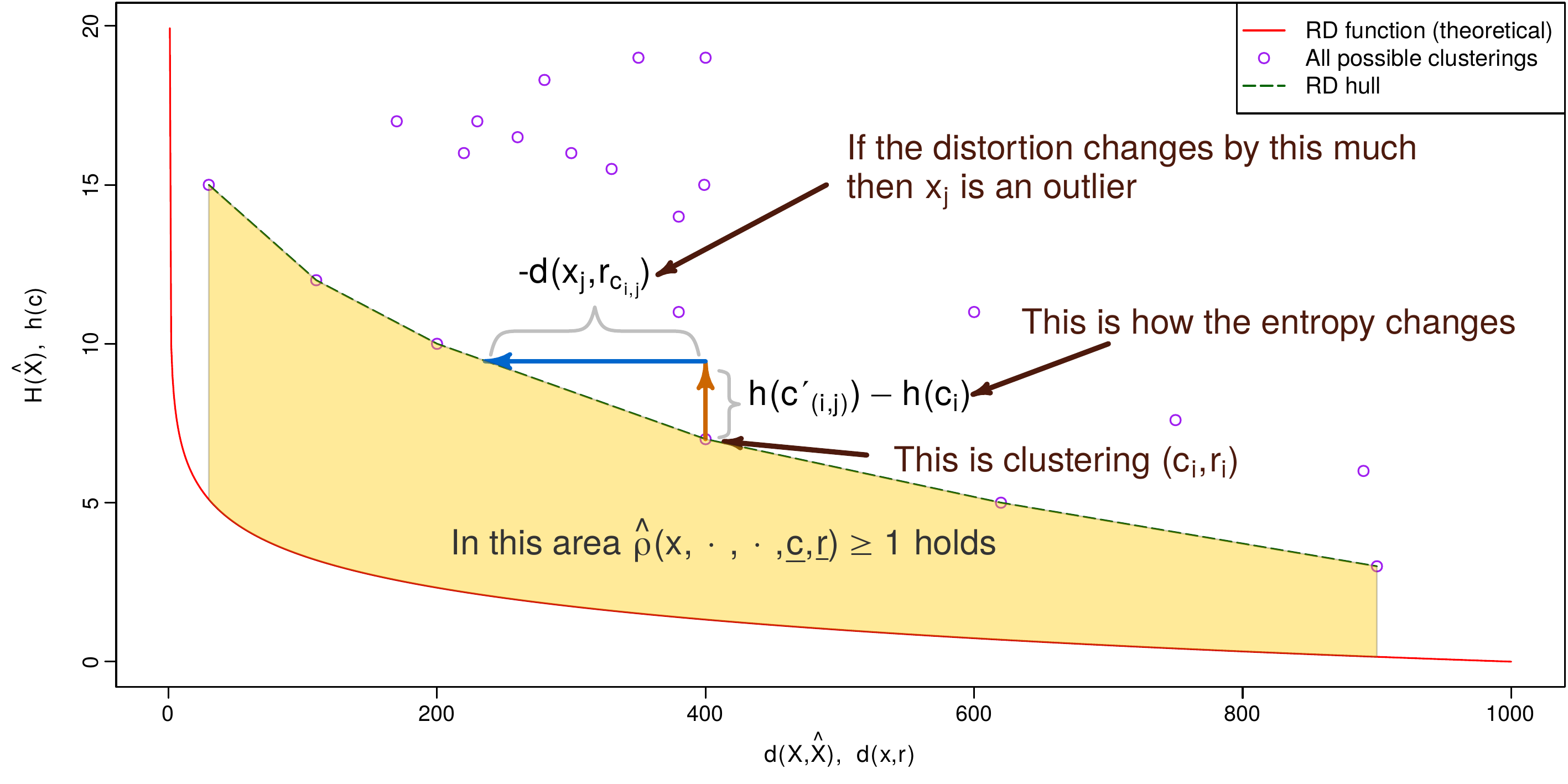}
	\caption{Geometric interpretation of the computation of cluster representivity. For every clustering on the rate-distortion hull, one can compute how the entropy would change if $x_j$ were represented by a new cluster. If this new clustering has a distortion that is sufficiently small to enter the area beneath the rate-distortion hull, then $x_j$ is an outlier that needs to be represented by itself rather than a cluster.}
	\label{fig:rho}
\end{figure*}

The change in distortion from $\bm{r}_i$ to $\bm{r}'_{(i,j)}$ is given by
\begin{equation}\label{eq:change_d}
	d(\bm{x},\bm{r}'_{(i,j)}) - d(\bm{x},\bm{r}_i) = -d(x_j,r_{c_{i,j}})
\end{equation}
which follows by assumption from Definition~\ref{def:rd_outlier}.
Intuitively, when one assigns $x_j$ to a new unique symbol, then this symbol perfectly represents $x_j$ and hence the total distortion decreases by $d(x_j,\bm{r}_{i,j})$.
Note that \eqref{eq:change_d} only depends on observation $x_j$ and the cluster representative $x_j$ is assigned to, i.e. $r_{c_{i,j}}$.

To evaluate $\hat{\rho}(\bm{x},\bm{c}'_{(i,j)},\bm{r}'_{(i,j)},\underline{\bm{c}},\underline{\bm{r}})$, one can combine \eqref{eq:change_e} and \eqref{eq:change_d} in the following way:
\begin{proposition}\label{prop:cluster_purping}
	Let $\bm{x}$ be a dataset and $(\underline{\bm{c}},\underline{\bm{r}})$ a set of clusterings.
	If $\bm{c}'_{(i,j)}$ and $\bm{r}'_{(i,j)}$ are defined as in \eqref{eq:cdash} and \eqref{eq:rdash}, respectively, then it holds that
	\begin{align} \label{eq:cluster_purging}
		&\hat{\rho}\left(\bm{x},\bm{c}'_{(i,j)},\bm{r}'_{(i,j)},\underline{\bm{c}},\underline{\bm{r}}\right) \ge 1 \nonumber\\ \Leftrightarrow\\&d(x_j,r_{c_{i,j}}) \ge \frac{h(\bm{c}'_{(i,j)}) \shortminus h(\bm{c}_i)}{\shortminus \kappa_{i}} \nonumber
	\end{align}
	where $\kappa_{i}$ is the slope of the rate-distortion hull between $d(\bm{x},\bm{r}_{i\shortminus1})$ and $d(\bm{x},\bm{r}_{i})$, with $i\neq1$.
\end{proposition}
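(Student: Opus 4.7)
The plan is to directly unfold the definition of $\hat{\rho}$ and rewrite the condition $\hat{\rho}(\bm{x},\bm{c}'_{(i,j)},\bm{r}'_{(i,j)},\underline{\bm{c}},\underline{\bm{r}}) \ge 1$ as an inequality between the rate-distortion hull evaluated at the new distortion and the new entropy. Since $h(\bm{c}'_{(i,j)}) > 0$, the inequality $\mathcal{L}(d(\bm{x},\bm{r}'_{(i,j)}),\underline{\bm{c}},\underline{\bm{r}})/h(\bm{c}'_{(i,j)}) \ge 1$ is equivalent to $\mathcal{L}(d(\bm{x},\bm{r}'_{(i,j)}),\underline{\bm{c}},\underline{\bm{r}}) \ge h(\bm{c}'_{(i,j)})$, which avoids any division and keeps the argument linear.

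Next, I would plug the linear form \eqref{eq:rd_curve} of the hull into the left-hand side. On the relevant $i$-th piece, $\mathcal{L}(D,\underline{\bm{c}},\underline{\bm{r}}) = \kappa_i D + \delta_i$. Substituting the expression $\delta_i = h(\bm{c}_i) - \kappa_i\cdot d(\bm{x},\bm{r}_i)$ from \eqref{eq:delta} collapses the right-hand side into the concise form $\kappa_i\bigl(d(\bm{x},\bm{r}'_{(i,j)}) - d(\bm{x},\bm{r}_i)\bigr) + h(\bm{c}_i) \ge h(\bm{c}'_{(i,j)})$, so that only the \emph{changes} in distortion and entropy appear.

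Now I would apply \eqref{eq:change_d} to rewrite $d(\bm{x},\bm{r}'_{(i,j)}) - d(\bm{x},\bm{r}_i) = -d(x_j,r_{c_{i,j}})$. Moving $h(\bm{c}_i)$ to the right yields $-\kappa_i\cdot d(x_j,r_{c_{i,j}}) \ge h(\bm{c}'_{(i,j)}) - h(\bm{c}_i)$, which is exactly \eqref{eq:cluster_purging} after dividing by $-\kappa_i$. The last manipulation is where I expect the only real subtlety: the division preserves the inequality because $-\kappa_i > 0$, and this positivity is not assumed explicitly but follows from the fact that $\mathcal{L}$ is constructed as the lower convex hull of rate-distortion points, so each segment has nonpositive slope (otherwise the segment would be dominated by a lower point and excluded from the hull). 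I would state this briefly to justify the sign flip, noting the degenerate case $\kappa_i = 0$ does not occur on a strictly decreasing piece of the hull.

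The main obstacle — aside from sign bookkeeping — is making sure one is legitimately reading $\mathcal{L}$ on the $i$-th piece when evaluating at the perturbed distortion $d(\bm{x},\bm{r}'_{(i,j)}) = d(\bm{x},\bm{r}_i) - d(x_j,r_{c_{i,j}})$. Since removing $x_j$'s contribution strictly reduces the distortion, the new value lies below $d(\bm{x},\bm{r}_i)$, and the statement implicitly restricts attention to perturbations small enough that $d(\bm{x},\bm{r}'_{(i,j)}) \ge d(\bm{x},\bm{r}_{i-1})$, i.e.\ to the same linear piece used in defining $\kappa_i$. I would mention this as a standing hypothesis at the top of the proof so that the single-piece linear form of $\mathcal{L}$ may be used without further case analysis; with that in place the argument is a three-line chain of equivalences.
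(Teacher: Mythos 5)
Your proof is correct and follows essentially the same route as the paper's: unfold $\hat{\rho}$, substitute the linear form of the hull with $\delta_i$ from \eqref{eq:delta}, apply \eqref{eq:change_d}, and rearrange. You are in fact slightly more careful than the paper on two points it glosses over --- the sign of $\kappa_i$ needed for the final division (the paper only concedes in its limitations section that \eqref{eq:cluster_purging} fails on increasing hull segments) and the justification for reading $\mathcal{L}$ on the $i$-th piece, which the paper dispatches with a convexity remark that really only supports one direction of the equivalence.
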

Note that $i\neq1$ in Proposition~\ref{prop:cluster_purping} is necessary since there is no slope $\kappa_0$ left of $\bm{r}_1$ in the rate-distortion hull.
\begin{proof}
	Inserting \eqref{eq:rd_curve} into the left expression of \eqref{eq:cluster_purging} gives
	\begin{equation}\label{eq:proof_step1}
		\left(\kappa_\ell\cdot d(\bm{x},\bm{r}'_{(i,j)}) + \delta_\ell\right) \:/\: h(\bm{c}'_{(i,j)}) \ge 1
	\end{equation}
	where $\ell$ is the index of the slope and vertical intercept at $d(\bm{x},\bm{r}'_{(i,j)})$.
	Since it holds that $d(\bm{x},\bm{r}'_{(i,j)}) \leq d(\bm{x},\underline{\bm{r}}_i)$ and due to the convexity of $\mathcal{L}(\cdot)$, we can assume without loss of generality that $\ell=i$.
	Then, inserting \eqref{eq:delta} into \eqref{eq:proof_step1} and factorizing $\kappa_{i}$ gives
	\begin{equation}\label{eq:proof_step2}
		\left(\kappa_{i}\cdot\left(d(\bm{x},\bm{r}'_{(i,j)}) - d(\bm{x},\bm{r}_i) \right)+h(\bm{c}_i)\right) \:/\: h(\bm{c}'_{(i,j)})\ge 1.
	\end{equation}
	Finally, after inserting \eqref{eq:change_d} into \eqref{eq:proof_step2}, the resulting expression can easily be rearranged into the right side of \eqref{eq:cluster_purging}.
\end{proof}
The main point of Prop.~\ref{prop:cluster_purping} is that $\hat{\rho}(\bm{x},\bm{c}'_{(i,j)},\bm{r}'_{(i,j)},\underline{\bm{c}},\underline{\bm{r}})$ can be easily computed from the available clusterings.
A visual intuition of how $\hat{\rho}(\cdot)$ is computed can be seen in Fig.~\ref{fig:rho}.
A concrete algorithm is described in Section~\ref{sec:algo}.
Computational speedups implied by \eqref{eq:change_e} and \eqref{eq:cluster_purging} are discussed in Section~\ref{sec:efficiency}.

\section{Practical Aspects}\label{sec:practice}
After formalizing the theoretical background needed to efficiently perform Cluster Purging, we now address several practical issues and formulate concrete algorithms for an efficient computation.

\subsection{Interpretation}
Recall that any clustering is a representation of the raw data, and that a cluster is a representation of the data assigned to it.
In essence, the theoretical foundation of Cluster Purging concerns itself with the representivity of clusterings.
If a cluster would represent its data better if one of them were removed (purged), then that deviating observation is considered an outlier.
To make the concept of representivity more tangible, we address four critical questions that may be non-obvious to the reader.

\subsubsection{How can rate-distortion outliers be interpreted?}

\begin{figure*}[!th]
	\centering
	\includegraphics[scale=0.725]{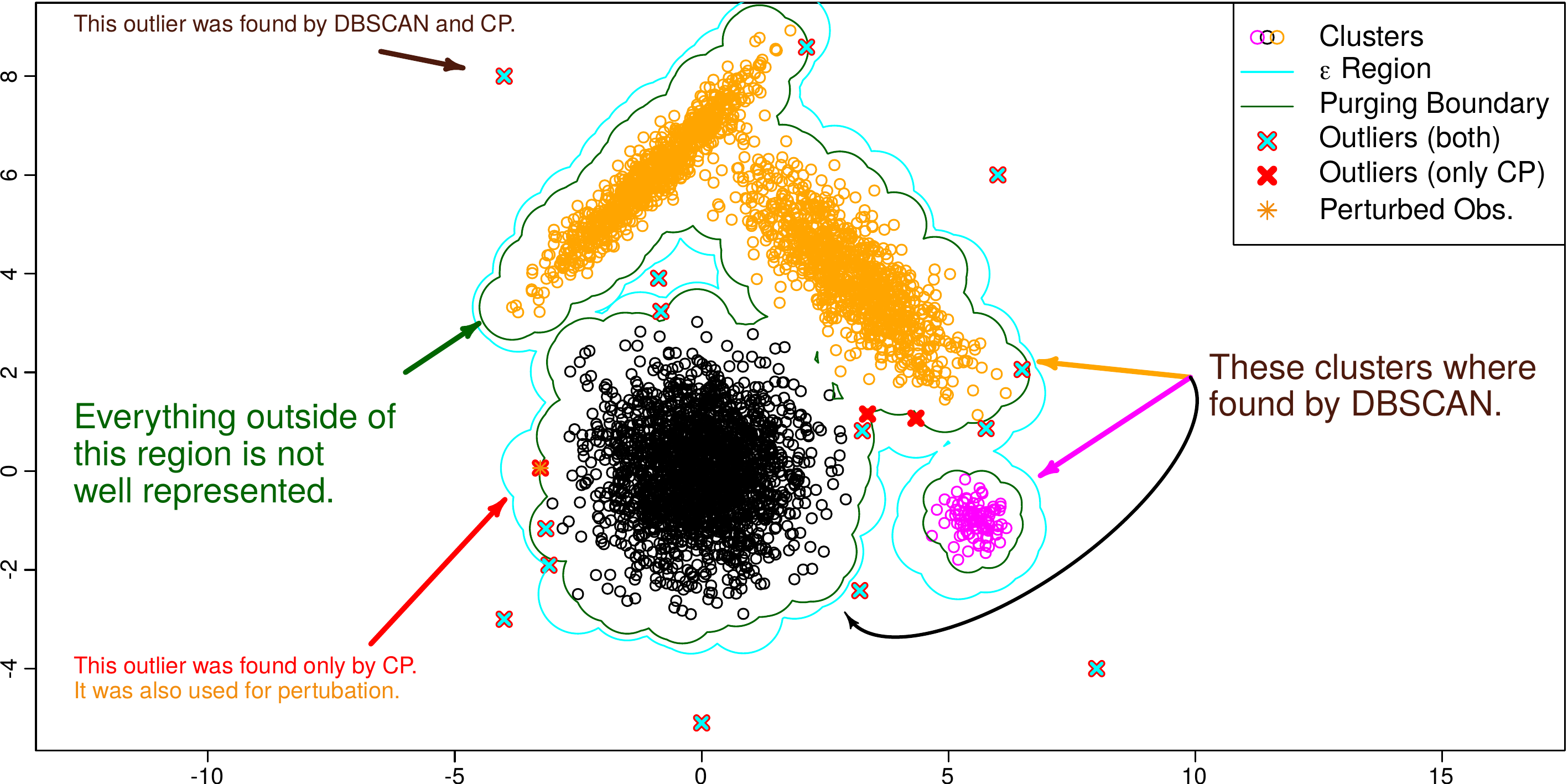}
	\caption{Cluster Purging (CP) based on DBSCAN with $\varepsilon=0.8$, \textrm{minPts}$=20$ and a $\max$-$\max$ perturbation (cf. Section~\ref{sec:eval_perturbation}	). The parametrization of DBSCAN is suboptimal, and the clustering representation can be improved by purging (i.e. uniquely encoding) outliers detected by CP. Overlapping purging boundaries were depicted as union of discs for readability. Note that observations within the $\varepsilon$ region and purging boundary may also be outliers if they are alone in their cluster (cluster size = 1).}
	\label{fig:compareDBCP}
\end{figure*}

In simple terms, a rate-distortion outlier is an observation that is ``far away'' from its cluster.
How ``far'' this needs to be is determined by a threshold that we call \textit{purging boundary}.
This purging boundary is inferred from cluster sizes and distortions across multiple clusterings, as well as from the raw dataset (see Eq.~\eqref{eq:cluster_purging}).
Hence, an accurate interpretation of rate-distortion outliers depends on how these quantities are measured.
For example, under Manhattan distances and a $k$-means clustering, all purging boundaries are hypercubes that are centered at the cluster's centroid and enclose inliers.
For DBSCAN and Euclidean distance, every observation within a specific cluster is surrounded by a hypersphere that encloses its nearest neighbor unless it is an outlier.
See Fig.~\ref{fig:compareDBCP} for a visualization.

In the context of high-dimensional data, interpretability is often addressed via dimensionality reductions such that every outlier can be described by a small subset of the original dimensions, see~\cite{dang2014discriminative,liu2017contextual}.
Similarly, rate-distortion outliers can be characterized by their low-entropy representation: They are observations that make the representation unnecessarily complicated.

\subsubsection{How is Cluster Purging different from distance-based outlier detection with clustering?}
Cluster Purging permits setups, e.g. centroid-based clustering and Euclidean distortion, that are very similar to conventional distance-based outlier detection methods such as \cite{pamula2011outlier,he2002outlier}.
The main difference between Cluster Purging and such methods is that purging boundaries are inferred based on a different clustering, and not based on a parameter.
Further, Cluster Purging is not limited to distance-based setups and is compatible with any well-defined dissimilarity measure and clustering technique, e.g. Kullback-Leibler divergence~\cite{kullback1951information} paired with fuzzy C-means clustering~\cite{dunn1973fuzzy}.

\subsubsection{Isn't Cluster Purging just another clustering-based outlier detection technique that fails if the clustering is bad?}
Not necessarily. Cluster Purging considers the original raw data via \eqref{eq:cluster_purging} in addition to all available clusterings.
Further, the rate-distortion hull~\eqref{eq:rd_curve} allows one to determine which clusterings among the available ones are best in terms of rate-distortion theory.
If all available clusterings are ``bad'', then Cluster Purging may fail to find correct outliers, yet if a single ``good'' clustering is available, then Cluster Purging will identify this clustering and use it for outlier detection.

\subsubsection{Can outliers really be detected via representivity? It seems strange that whether data are outliers depends on the size of their cluster.}
We describe a short example where rate-distortion theory-based representivity is intuitive for outlier detection:
A group of 100 people is asked to form small ``parties'' to represent their political opinions.
95 people consider themselves \textit{moderate} and form a moderate party, whereas 4 people form an \textit{extremist} party and 1 person has no opinion.
If this 1 person joined the small extremist party (clustering A), then this would have a more noticeable (outlying) effect on this party's political orientation than if the 1 person joined the large moderate party (clustering B).
Likewise, purging boundaries grow logarithmically as clusters become larger (see Eq.~\eqref{eq:change_e}). 
\subsection{Algorithms for Cluster Purging}\label{sec:algo}
\subsubsection{Parameter-free Cluster Purging}
From the theoretical formulations in Section~\ref{sec:theory}, one can directly derive an algorithm for Cluster Purging.
This algorithm takes a dataset $\bm{x}$ and a set of clusterings $(\underline{\bm{c}},\underline{\bm{r}})$ as input and returns a set of outliers without requiring any additional parameters.
In simple terms, this algorithm can be summarized as
\begin{enumerate}
	\item Compute the entropy and distortion of all clusterings.
	\item Find the lower convex hull of the resulting entropy-distortion pairs to construct a rate-distortion hull.
	\item For every cluster in every clustering on this rate-distortion hull, compute how the entropy would change if an observation in this cluster were removed.
	\item Based on the resulting changes of entropy and the slope of the rate-distortion hull, compute how much the distortion must change to pass the ``purging boundary''.
	\item Data that, when purged, would be outside of the purging boundary, as well as clusters of size 1, are outliers.
\end{enumerate}
A visual intuition of how this computation is performed is depicted in Figs. \ref{fig:rho} and \ref{fig:compareDBCP}, whereas pseudo-code for this algorithm is listed in Algorithm~\ref{alg:cluster_purging}.
An $R$ implementation can be found online\footnote{\url{https://tinyurl.com/f59ezjhk}}.
\begin{algorithm}
	\caption{Parameter-free Cluster Purging}
	\begin{algorithmic}[1]
		\REQUIRE $\bm{x}$,$\underline{\bm{c}},\underline{\bm{r}}$
		\STATE outliers $\leftarrow\emptyset$
		\FOR {clustering $(\bm{c},\bm{r})\in(\underline{\bm{c}},\underline{\bm{r}})$}
		\STATE Compute $h(\bm{c})$ according to \eqref{eq:entropy};
		\STATE Compute $d(\bm{x},\bm{r})$ according to \eqref{eq:distortion};
		\ENDFOR
		\STATE Set $\mathcal{L}$ to the lower convex hull of all $h$ and $d$;
		\STATE Compute $\kappa$ (the slopes of $\mathcal{L}$) via linear interpolation;
		\STATE Drop clusterings that are not on $\mathcal{L}$;
		\STATE Sort clusterings increasingly according to $d(\bm{x},\bm{r})$;
		\STATE Drop clustering with highest entropy (cf. Prop.~\ref{prop:cluster_purping});

		\FORALL{$(\bm{c},\bm{r})$}
		\FOR{cluster $g\in(\bm{c},\bm{r})$}
		\STATE Compute change of entropy  according to \eqref{eq:change_e};
		\ENDFOR
		\ENDFOR
		\FOR{$j\in(1,\mathellipsis,n)$}
		\IF {any side of \eqref{eq:cluster_purging} holds for all ($\bm{c},\bm{r}$)}
		\STATE outliers $\leftarrow$ outliers $\cup\; x_j$;
		\ENDIF
		
		\ENDFOR
		
		\RETURN outliers
	\end{algorithmic}
	\label{alg:cluster_purging}
\end{algorithm}
Note that the selected distortion measure $d(\cdot)$ should be equal to the distortion measure that was used to compute clusterings, e.g. for $k$-means clustering $d(\cdot)$ should be Euclidean distance, for DBSCAN it should be nearest neighbor distance.
We confirmed this insight in preliminary experiments, where it turned out that heterogeneous distortion pairs were inferior to homogeneous distortion pairs in all settings we tested.
\subsubsection{Parametric Cluster Purging}
In some settings, it may be desirable to tune cluster purging to a specific dataset.
While the parameter-free nature of the theoretical formulation of Cluster Purging prevents this, one can ``cheat'' by replacing the estimate of cluster representivity $\hat{\rho}(\cdot)$  with its true value $\rho(\cdot)$.
Of course, $\rho(\cdot)$ is not known, yet in supervised settings it can be learned from a training set, or a user may simply guess its value or use a default parametrization.

In particular, the concrete value of $\rho(\cdot)$ at a specific clustering $(\bm{c},\bm{r})$ is not even needed.
According to \eqref{eq:cluster_purging}, it is sufficient if slope $\kappa$ of the rate-distortion function at $d(\bm{x},\bm{r})$ is passed as parameter, since the remaining quantities needed to perform Cluster Purging can be easily inferred from $\kappa$.
A concrete algorithm is listed in Algorithm~\ref{alg:godsent}.
\begin{algorithm}
	\caption{Parametric Cluster Purging}
	\begin{algorithmic}[1]
		\REQUIRE $\bm{x},\bm{c},\bm{r},\kappa$
		\STATE outliers $\leftarrow\emptyset$
		\FOR{cluster $g\in(\bm{c},\bm{r})$}
		\STATE Compute change of entropy $\Delta_g$ according to \eqref{eq:change_e};
		\ENDFOR
		\FOR {$j\in(1,\mathellipsis,n)$}
		\IF {$d(x_j,r_{c_j})\cdot \kappa\le{\Delta_{c_j}}$}
		\STATE outliers $\leftarrow$ outliers $\cup\; x_j$;
		\ENDIF
		\ENDFOR
		\RETURN outliers
	\end{algorithmic}
	\label{alg:godsent}
\end{algorithm}

A clear advantage of this parametric variant of Cluster Purging is that, if the true slope is passed to the algorithm, it will necessarily be superior to the parameter-free variant.
Further, this variant only needs a single clustering, and is very simple overall.
However, we believe that the parameter-free algorithm should generally be preferred over its parametric counterpart (cf. \cite{keogh2004towards}).

\subsection{Efficiency}\label{sec:efficiency}
In the pseudo-code of Algorithms~\ref{alg:cluster_purging} and \ref{alg:godsent} there are several verbose instructions whose computational complexity might be non-obvious.
In Algorithm~\ref{alg:cluster_purging}, lines 3 and 4 require $\mathcal{O}(n)$ steps, whereas all remaining verbose steps in both algorithms require at most $O(\nu td)$ steps.
Asymptotically, $\nu$ is the largest number of clusters, $t$ the number of clusterings, and $d$ the dimensionality of the dataset.
Since all three of these quantities were assumed to be constant, these steps can hence be performed in $\mathcal{O}(1)$ time.
Consequently, the time complexity of both Algorithms can be reduced to $\mathcal{O}(n)$.

In terms of space complexity, one will naturally require at least $\mathcal{O}(tn)$ space to store all clusterings.
The remaining memory overhead of both algorithms is constant.

\subsection{Obtaining Multiple Clusterings $(\underline{\bm{c}},\underline{\bm{r}})$}\label{sec:practice_perturbation}
In recent years, datasets have become increasingly large and ``\textit{in many situations, the knowledge extraction process has to be very efficient and close to real time because storing all observed data is nearly infeasible}''~\cite{wu2013data}.
Consequently, it may occur in practice that computing multiple good clusterings of a dataset may be too costly, although the above formulation of rate-distortion hulls would require this.
To address this issue, we here discuss methods for efficiently obtaining similar clusterings, i.e. perturbations, from a single ``seed'' clustering.

In general the theoretical formulations of Cluster Purging permit arbitrary perturbations.
However, the quality of a clustering representivity estimate depends on how ``strongly'' the seed clustering was perturbed.
Hence, from a rate-distortion theoretic perspective, it is desirable that clustering $(\bm{c},\bm{r})$  and its perturbation $(\bm{\tilde{c}},\bm{\tilde{r}})$ are as similar as possible, yet not identical.
To achieve this, it is typically sufficient to modify the cluster assignment and representation of a single observation $x_j$, given that this change results in a different entropy-distortion pair, i.e. $[h(\bm{c}),d(\bm{x},\bm{r})]\neq[h(\bm{\tilde{c}}),d(\bm{x},\bm{\tilde{r}})]$.
A concrete change that causes this is typically given by selecting the cluster with the largest size, i.e. $\text{argmax}\bm{f}^{\bm{c}}$, and removing the observation that causes the largest distortion in this cluster.
At first glance, this may seem counterintuitive, since the aim of a perturbation is to cause a small yet sufficiently large change in the clustering, and hence removing the observation from the smallest cluster with the smallest distortion would seem better.
We elaborate on this and empirically compare other perturbation strategies in Section~\ref{sec:eval_perturbation}.

\subsection{Nearest Neighbor Representations}

A further issue may occur when the selected clustering technique, e.g. DBSCAN, yields cluster assignments $\bm{c}$ yet no representations $\bm{r}$.
In such cases, one can jointly infer $\bm{r}$ from $\bm{x}$ and $\bm{c}$ based on the following intuition:
Since clustering techniques group data according to some similarity measure~\cite{jain1999data}, this similarity measure implicitly contains information on what a representation for such a clustering technique might be.
In the case of DBSCAN, which clusters data according to nearest neighbor distances, one can simply represent every $x_j$ by its nearest neighbor within the cluster of $x_j$.
While using such representations leads to no compression of the data, this is still meaningful if one wants to detect outliers.
We demonstrate this empirically in Section~\ref{sec:empirical_eval}, whereas a visualization can be seen in Fig.~\ref{fig:db}.

\begin{figure}[!t]
	\includegraphics[scale=0.75]{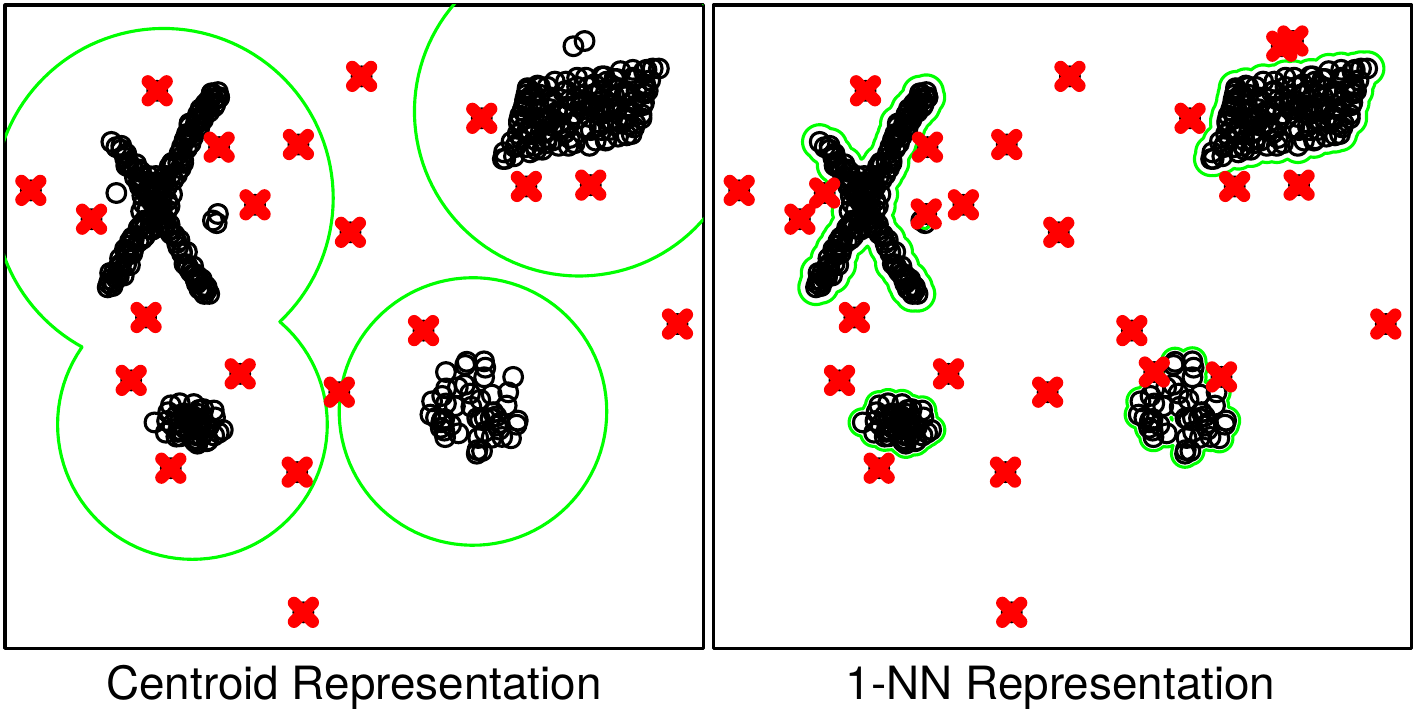}
	\caption{Cluster Purging applied to a synthetic dataset~\cite{bohm2009coco} clustered with DBSCAN. Detected outliers are depicted in red ($\times$). \textit{Left}: For every cluster, a single Euclidean centroid was used as representative, resulting in large, spherical purging boundaries. \textit{Right}: For every observation, its nearest neighbor within the same cluster was used as representative, resulting in tight boundaries that fit the data well. }
	\label{fig:db}
\end{figure}

\subsection{Rules of Thumb}
Since Cluster Purging allows highly diverse setups, we formulate three rules of thumb for guiding practitioners:

First, different clusterings offer different entropy-distortion trade-offs, e.g. a clustering with $n$ clusters leads to a lossless representation yet no compression, whereas a representation with a single cluster leads to good compression yet large distortion.
Since purging boundaries depend on cluster sizes, they will adapt to different entropy-distortion trade-offs.
Generally speaking, Cluster Purging will work well under many different trade-offs as long as one avoids the extremes of the empirical rate-distortion function.

Secondly, it is desirable that the selected clusterings and/or perturbations have similar entropy-distortion trade-offs.
The reason for this is that the estimated rate-distortion slope between two clusterings becomes less accurate the further these clusterings are apart in rate-distortion space.
Hence, it is generally not a good idea to combine different clustering techniques, e.g. $k$-means and DBSCAN.
Pairing similar clusterings is usually better, e.g. $7$-means with $8$-means.
Fixing a single clustering $(\bm{c},\bm{r})$ and computing a slight perturbation $(\tilde{\bm{c}},\tilde{\bm{r}})$ by changing the cluster assignment of a single observation is likely best.

Thirdly, the selected distortion measure should be related to the selected clustering technique.
For instance, it is often better to pair $k$-means with Euclidean distortion than with Hamming distortion, and for hierarchical clusterings one should use the same distance function for computing the clustering and for measuring distortion.
For probabilistic clustering techniques, distortion should likely be measured via Kullback-Leibler divergence.

\subsection{Limitations}
The concept of rate-distortion outliers describes \textit{individual} observations that are outlying.
Collective outliers~\cite{chandola2009anomaly} and outlying clusters are not covered and will be addressed in future work. 
Further, in rare cases it may occur that the computed rate-distortion hull has an increasing segment.
In such an increasing region \eqref{eq:cluster_purging} does not hold, and it is best to ignore this region of the rate-distortion hull.
Finally, while Algorithms~\ref{alg:cluster_purging} and \ref{alg:godsent} can be computed in $\mathcal{O}(n)$ time, the computation of the clusterings they are based on may be more costly.

\section{Experimental Evaluation}\label{sec:experiments}
To evaluate the practical applicability and correctness of rate-distortion theory for outlier detection, we conduct a case study in which different perturbation strategies are analyzed (Section~\ref{sec:eval_perturbation}).
In Section~\ref{sec:empirical_eval}, we compare our method Cluster Purging (CP) with other state-of-the-art outlier detection methods in an experimental evaluation on benchmark datasets.
Further, we also analyze how frequently Cluster Purging improves upon outliers detected by an existing clustering.
Throughout all experiments, we use Euclidean distance as distance measure in all clustering techniques, and consequently also as distortion measure.
We avoid using non-distance distortion measures such as Kullback-Leibler divergence, since this would make a fair comparison of Cluster Purging with distance-based outlier detectors difficult.
Centroids are computed as the arithmetic mean of all observations in a cluster whenever needed.
The \textbf{source code} for reproducing all results, as well as all data can be accessed online\footnote{\url{https://tinyurl.com/f59ezjhk}}.

\subsection{Case Study: Perturbation for Map Denoising}\label{sec:eval_perturbation}

\begin{figure}[!t]
	\includegraphics[scale=0.75]{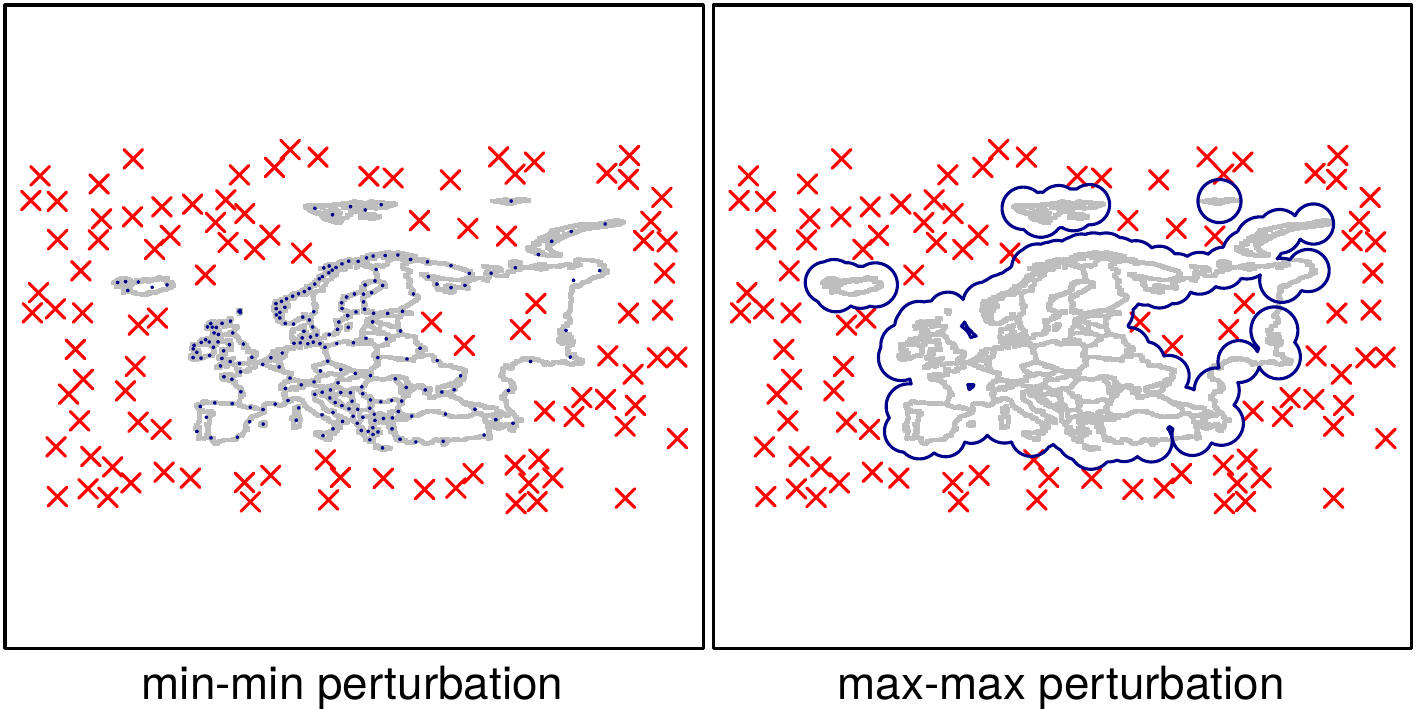}
	\caption{Case Study: Comparison of Purging Boundaries (blue) with $\min$-$\min$ perturbation and $\max$-$\max$ perturbation. True noise points are depicted in red ($\times$), while detected outliers are not depicted for readability (the left plot would be covered in outliers). \textit{Left}: Purging boundaries  derived from a $\min$-$\min$ perturbation are so small that they are barely visible. \textit{Right}: Purging boundaries derived from a $\max$-$\max$ perturbation are $\approx 67$ times larger than $\min$-$\min$ purging boundaries, almost fully covering the map of Europe.}
	\label{fig:europe}
\end{figure}

From the elaborations made in Section~\ref{sec:practice_perturbation}, one can derive four different perturbation strategies\footnote{In all four perturbation strategy descriptions, ``purge'' is short for ``reassign to additional unique cluster''.}
\begin{enumerate}
	\item $\min$-$\min$: Select smallest cluster, purge least distorted observation.
	\item $\min$-$\max$: Select smallest cluster, purge most distorted observation.
	\item $\max$-$\min$: Select largest cluster, purge least distorted observation.
	\item $\max$-$\max$: Select largest cluster, purge most distorted observation.
\end{enumerate}

We compare all four strategies in a case study, where the goal is to denoise a dataset via $k$-means clustering and outlier detection. The dataset contains coordinates of a map of the continent Europe~\cite{franti2018k} with $100$ artificially added noise points.
Since $k$-means clustering algorithms are sensitive to the selected initial centers, we fix the number of centroids to $k=225$, and compute $1000$ different initializations,  each for $10$ different initial random seeds.
For every computed clustering, we perform Cluster Purging based on all $4$ perturbation strategies with noise points considered as outliers.
As evaluation measure, we use $F_1 = 2\cdot\frac{\text{precision}\cdot\text{recall}}{\text{precision}+\text{recall}}$.
Further, since inlier and outlier classes are heavily imbalanced ($169673:100$) we compute average class-wise $F_1$-scores in addition to average raw $F_1$-scores.
The results of this case study are reported in Table~\ref{tab:case_study}, whereas a visualization can be seen in Fig.~\ref{fig:europe}.

\begin{table}[!t]
	\centering
	\caption{Case Study: Average Class-Wise $F_1$-scores}
	\label{tab:case_study}
	\begin{tabular}{r|cccc}
		&\multicolumn{4}{c}{Perturbation Strategy}\\\hline
	Measure	&$\min$-$\min$&$\min$-$\max$&$\max$-$\min$&$\max$-$\max$ \\\hline
	Outlier $F_1$-score&\textbf{0.17} &  0.08 &  0.00 &  0.16 \\
	Inlier $F_1$-score&0.43 &  0.94 &  0.16 &  \textbf{0.97} \\\hline
	Combined $F_1$-score&0.30 &  0.51 &  0.08  & \textbf{0.56}\\
	\end{tabular}
	
\end{table}

\subsection{Competitive Evaluation on Benchmark Datasets}\label{sec:empirical_eval}

\subsubsection{Setup}
We compare both variants of our method, Cluster Purging (CP) and Parametric Cluster Purging (CPP) against closely related outlier detection methods mentioned in Section~\ref{sec:related_work}:
\begin{itemize}
	\item The one-class rate-distortion model (OCRD)~\cite{crammer2008rate}.
	\item Variants of $k$-means that detect outliers, i.e. $k$-means$\shortminus\shortminus$ (KM$\shortminus\shortminus$)~\cite{chawla2013k} and $k$-means with outlier removal (KMOR)~\cite{gan2017k}.
	\item Raw clusterings, i.e. $k$-means clustering, Hierarchical Agglomerative Clustering (HAC) with complete linkage and DBSCAN~\cite{ester1996density}, with singelton clusters considered as outliers (these variants are referred to as \textit{Vanilla} detectors)
	\item Cluster-based local outlier factor (CBLOF)~\cite{jiang2008clustering} based on all vanilla clusterings and raw local outlier factor (LOF)~\cite{breunig2000lof}.
	\item Outlier detection for high-dimensional data via Local Projection Score (LPS)~\cite{liu2017efficient}.
	\item Cluster Purging (CP) with a single $\max$-$\max$ perturbation and Parametric Cluster Purging (CPP), both based on all vanilla clustering techniques ($t=1$ clustering each). Other perturbation methods are addressed in Section~\ref{sec:perturbation_results}.
\end{itemize}
We omit \cite{ruiz2007c,smiti2013soft} since they use soft clusterings; \cite{pamula2011outlier} and\cite{he2002outlier} because they have high computational cost and are not reproducible, respectively;\cite{papadimitriou2003loci,duan2007local,kriegel2009loop} since we found that two variants of the Local Outlier Factor are sufficient.
To enable a comparison with LOF, CBLOF and LPS, which return outlier scores instead of outliers indices, we take the top $m=|\bm{y}|$ scores of these methods, where $m$ is the true number of outliers in dataset $\bm{x}$.
As evaluation measure, we use $F_1$-score.
Further, since all clustering algorithms under consideration (and most outlier detectors) have parameters, it is difficult to generalize outlier detection performances based on a single arbitrarily selected parametrization.
Hence, the parameters of all clustering techniques (and outlier detection methods) are grid searched over their respective parameter space towards maximizing \textit{$F_1$-score}.
For methods having several parameters where a grid search would be infeasible, some parameters are set according to literature recommendations.
The detailed grid search setups and parametrizations are listed in Table~\ref{tab:methods}.

\begin{table}[!t]
	\caption{Compared outlier detection methods and their parametrizations}
	\label{tab:methods}
	\begin{tabular}{l|l|l}
		\textbf{Methods} & \textbf{Grid searched parameters} &\textbf{Hard coded parameters}\\
		\hline
		OCRD&$\beta$: $[0.1,\mathellipsis,10]$ ($n$ steps)&$q(0)=0.5$\\
		&&uniform prior\\
		\hline
		\multicolumn{3}{c}{$k$-means}\\
		\hline
		Vanilla&$k=[2,\mathellipsis,10]$&$n_\text{start}$=1000\\
		KM$\shortminus\shortminus$&$k=[2,\mathellipsis,10]$&$n_{\text{outlier}}=m$\\
		KMOR&$k=[2,\mathellipsis,10]$, &$\delta=1$\\
		&$\gamma=[0.1,\mathellipsis,10]$ ($n$ steps)&\\
		CBLOF&{\scriptsize (vanilla parameters)\par}&$b=\min(k\shortminus1,5)$\\
		CP&{\scriptsize (vanilla parameters)\par}&\\
		CPP&{\scriptsize (vanilla parameters)\par},&\\
		&$\kappa =[0.1,\mathellipsis,10]$ ($n$ steps)&\\
		\hline
		\multicolumn{3}{c}{HAC}\\
		\hline
		Vanilla &$k=[1,\mathellipsis,n]$&\\
		CBLOF&{\scriptsize (vanilla parameters)\par}&$b=\min(k\shortminus1,5)$\\
		CP&{\scriptsize (vanilla parameters)\par}&\\
		CPP&{\scriptsize (vanilla parameters)\par},&\\
		&$\kappa= [0.1,\mathellipsis,10]$ ($n$ steps)&\\
		\hline
		\multicolumn{3}{c}{DBSCAN}\\
		\hline
		Vanilla&$\min_p$=$[d+1,\mathellipsis,d+10]$&\\
		&$\varepsilon=$ unique $\min_p$-NN dists.&\\
		CBLOF&{\scriptsize (vanilla parameters)\par}&$b=\min(k\shortminus1,5)$\\
		CP&{\scriptsize (vanilla parameters)\par}&\\
		CPP&{\scriptsize (vanilla parameters),\par}&\\
		&$\kappa= [0.1,\mathellipsis,10]$ ($n$ steps)&\\
		\hline
		\multicolumn{3}{c}{No Clustering}\\
		\hline
		LOF & $k=[1,\mathellipsis,n-1]$&\\
		LPS & $k=[2,\mathellipsis,\lceil\frac{d}{2}\rceil]$ & $n_{\text{outlier}}=m$
	\end{tabular}
\end{table}

Additionally, to evaluate the claimed computational efficiency of CP and CPP, we track the average runtime of each method per call.
We report this quantity instead of overall runtime since the total number of needed calls to each outlier detection method varies for each grid search.

\subsubsection{Datasets}
The experimental evaluation of all detectors is performed on 13 publicly available benchmark datasets, taken from~\cite{campos2016evaluation}.
These datasets come from diverse domains such as medicine, space, and telecommunications, and were commonly used as benchmarks in literature.
More detailed descriptions of the domain background of these datasets can be found in~\cite{campos2016evaluation}.
For this experimental evaluation, dataset Arrhymthia is particularly noteworthy since it is high-dimensional with $n\approx d$, and Heart, Pima and Ionosphere since they have an outlier ratio $\frac{m}{n}$ close to $50\%$.
\subsubsection{Main Results}

The main results of the competitive evaluation  are depicted in Table~\ref{tab:results}.
Overall, detectors based on $k$-means clusterings performed worse than detectors based on other clusterings.
The overall highest average $F_1$-score was achieved by CBLOF based on HAC clustering.
For other clustering methods, CPP performed best.
The average performance of OCRD, which is bound to a Blahut-Arimoto-like clustering, was competitive with detectors based on $k$-means clusterings, yet lower than that of detectors based on HAC and DBSCAN.

Regarding computational efficiency, vanilla clusterings were faster than methods based on these clusterings.
The fastest method was vanilla $k$-means, while CPP had the overall lowest surplus runtime after its clustering was computed.
The slowest method was LOF followed by LPS.

When considering on how many datasets detectors with exchangeable clusterings did not perform worse than the respective vanilla clustering, there is a clear ranking.
Our method CPP performed best (100\%), followed by CP (85\%), followed by CBLOF (62\%).

\begin{table*}[!th]
	\caption{Competitive Evaluation Results.}
	\scriptsize
	\centering
	\setlength{\tabcolsep}{3pt}
	\begin{tabular}{r|c|cccccc|cccc|cccc|cc}
		\hline
		Clustering&\multicolumn{1}{c}{B-A}&\multicolumn{6}{c}{k-means}&\multicolumn{4}{c}{HAC}&\multicolumn{4}{c}{DBSCAN}&\multicolumn{2}{c}{None}\\
		\hline
		Detector& OCRD & Vanilla&KM$\shortminus \shortminus$ & KMOR&CBLOF & CP & CPP & Vanilla & CBLOF & CP & CPP& Vanilla & CBLOF & CP & CPP&LOF&LPS\\ 
		\hline
		Adapts \#outlier? &\checkmark&\checkmark&$\times$&\checkmark&$\times$&\checkmark&\checkmark&\checkmark&$\times$&\checkmark&\checkmark&\checkmark&$\times$&\checkmark&\checkmark&$\times$&$\times$\\\hline
		Parameter-free?&$\times$&$\times$&$\times$&$\times$&$\times$&\checkmark&$\times$&$\times$&$\times$&\checkmark&$\times$&$\times$&$\times$&\checkmark&$\times$&$\times$&$\times$\\\hline
		\multicolumn{18}{c}{F\textsubscript{1}-score}\\\hline
		Arrhymthia & 0.68 & 0.01 & 0.67 & 0.63 & 0.63 & 0.20 & 0.69 & 0.68 & 0.67 & 0.70 & 0.71 & 0.62 & 0.66 & 0.62 & 0.69 & 0.69 & 0.60 \\ 
		Heart & 0.65 & 0.00 & 0.57 & 0.62 & 0.53 & 0.16 & 0.63 & 0.64 & 0.56 & 0.64 & 0.65 & 0.63 & 0.54 & 0.63 & 0.67 & 0.55 & 0.48 \\ 
		Hepatitis & 0.43 & 0.00 & 0.23 & 0.41 & 0.31 & 0.24 & 0.34 & 0.31 & 0.31 & 0.32 & 0.36 & 0.35 & 0.31 & 0.35 & 0.35 & 0.31 & 0.23 \\ 
		Parkinson & 0.86 & 0.00 & 0.80 & 0.86 & 0.78 & 0.12 & 0.79 & 0.86 & 0.86 & 0.86 & 0.86 & 0.81 & 0.82 & 0.81 & 0.86 & 0.78 & 0.73 \\ 
		Pima & 0.60 & 0.00 & 0.49 & 0.56 & 0.47 & 0.20 & 0.55 & 0.52 & 0.50 & 0.52 & 0.56 & 0.54 & 0.47 & 0.53 & 0.54 & 0.54 & 0.43 \\ 
		Stamps & 0.59 & 0.00 & 0.29 & 0.51 & 0.45 & 0.16 & 0.38 & 0.24 & 0.94 & 0.33 & 0.52 & 0.64 & 0.42 & 0.65 & 0.65 & 0.39 & 0.65 \\ 
		Glass & 0.18 & 0.00 & 0.11 & 0.24 & 0.44 & 0.24 & 0.34 & 0.32 & 0.22 & 0.33 & 0.36 & 0.33 & 0.44 & 0.33 & 0.33 & 0.33 & 0.11 \\ 
		Ionosphere & 0.69 & 0.00 & 0.82 & 0.77 & 0.67 & 0.51 & 0.80 & 0.86 & 0.75 & 0.84 & 0.87 & 0.77 & 0.85 & 0.77 & 0.88 & 0.83 & 0.67 \\ 
		Lympho & 0.86 & 0.00 & 0.33 & 0.40 & 0.17 & 0.67 & 0.80 & 0.67 & 0.33 & 0.83 & 0.83 & 0.29 & 0.67 & 0.55 & 0.62 & 0.83 & 0.33 \\ 
		Shuttle & 0.32 & 0.00 & 0.23 & 0.21 & 0.15 & 0.11 & 0.20 & 0.21 & 0.85 & 0.21 & 0.27 & 0.32 & 0.15 & 0.32 & 0.34 & 0.31 & 0.31 \\ 
		WBC & 0.70 & 0.00 & 0.70 & 0.78 & 0.60 & 0.74 & 0.78 & 0.53 & 1.00 & 0.64 & 0.78 & 0.82 & 0.50 & 0.82 & 0.82 & 0.80 & 0.60 \\ 
		WDBC & 0.67 & 0.00 & 0.80 & 0.84 & 0.80 & 0.80 & 0.84 & 0.84 & 0.90 & 0.78 & 0.90 & 0.84 & 0.90 & 0.90 & 0.90 & 0.80 & 0.70 \\ 
		WPBC & 0.40 & 0.00 & 0.23 & 0.40 & 0.34 & 0.19 & 0.41 & 0.39 & 0.43 & 0.41 & 0.42 & 0.44 & 0.38 & 0.44 & 0.44 & 0.36 & 0.28 \\  
		\hline
		Average& 0.59  &  0.00  &  0.48  &  0.56  &  0.49  &  0.33  &  \textbf{0.58}  &  0.54  &  \textbf{0.64}  &  0.57  &  0.62  &  0.57  &  0.55  &  0.59  &  \textbf{0.62} & 0.58 & 0.47 \\
		
		\multicolumn{18}{c}{\textcolor{white}{Invisible}} \\ \hline
		\multicolumn{18}{c}{Average runtime per method call (milliseconds)}\\\hline
		Arrhymthia & 25.48 & 6.04 & 274.02 & 109.27 & 6.25 & 6.25 & 6.27 & 2.25 & 28.64 & 22.11 & 11.91 & 97.20 & 173.57 & 263.24 & 109.76 & 543.70 & 4065.01 \\ 
		Heart & 9.39 & 0.21 & 78.56 & 22.09 & 0.21 & 0.21 & 0.24 & 0.86 & 10.88 & 5.85 & 2.77 & 1.06 & 17.57 & 23.08 & 1.65 & 278.49 & 29.91 \\ 
		Hepatitis & 2.86 & 0.09 & 18.03 & 5.21 & 0.10 & 0.09 & 0.11 & 0.35 & 3.48 & 3.15 & 1.12 & 0.40 & 5.70 & 7.54 & 0.65 & 136.42 & 11.62 \\ 
		Parkinson & 9.97 & 0.15 & 29.96 & 16.91 & 0.16 & 0.16 & 0.19 & 0.62 & 8.53 & 5.43 & 2.28 & 0.85 & 15.26 & 17.67 & 1.48 & 222.19 & 26.45 \\ 
		Pima & 41.37 & 0.47 & 405.15 & 97.74 & 0.49 & 0.49 & 0.56 & 5.12 & 33.45 & 16.09 & 7.50 & 2.24 & 62.45 & 62.80 & 4.34 & 969.16 & 76.47 \\ 
		Stamps & 22.08 & 0.31 & 146.76 & 31.91 & 0.32 & 0.32 & 0.35 & 1.31 & 13.58 & 7.21 & 3.28 & 1.86 & 26.04 & 29.72 & 2.04 & 349.49 & 34.01 \\ 
		Glass & 13.47 & 0.15 & 65.98 & 18.01 & 0.15 & 0.15 & 0.18 & 0.68 & 7.96 & 5.29 & 2.05 & 0.80 & 16.07 & 18.33 & 1.32 & 238.27 & 24.91 \\ 
		Ionosphere & 21.40 & 0.62 & 101.73 & 43.48 & 0.63 & 0.63 & 0.67 & 1.45 & 14.15 & 8.87 & 4.10 & 6.81 & 44.66 & 44.89 & 3.00 & 358.27 & 123.49 \\ 
		Lympho & 5.85 & 0.10 & 77.28 & 11.33 & 0.10 & 0.10 & 0.13 & 0.38 & 6.62 & 3.85 & 1.75 & 0.41 & 7.55 & 12.89 & 1.07 & 172.90 & 17.45 \\ 
		Shuttle & 54.22 & 0.98 & 56.49 & 138.08 & 1.02 & 1.02 & 1.11 & 9.74 & 46.86 & 21.30 & 10.45 & 10.19 & 111.05 & 112.32 & 6.25 & 1447.65 & 95.85 \\ 
		WBC & 11.92 & 0.18 & 81.76 & 20.14 & 0.19 & 0.19 & 0.21 & 0.70 & 8.37 & 5.60 & 2.23 & 1.38 & 16.19 & 22.96 & 1.42 & 250.97 & 25.09 \\ 
		WDBC & 21.38 & 0.52 & 140.29 & 40.72 & 0.53 & 0.53 & 0.58 & 1.53 & 16.75 & 9.67 & 4.42 & 2.35 & 36.57 & 36.50 & 2.89 & 375.74 & 120.32 \\ 
		WPBC & 8.82 & 0.38 & 71.80 & 20.27 & 0.38 & 0.38 & 0.41 & 0.71 & 9.19 & 5.58 & 2.57 & 3.00 & 16.15 & 23.31 & 1.75 & 223.21 & 33.84 \\ 
		\hline
		Total average&19.09  &  \textbf{0.79}  &  119.06     &  44.24   &  0.81   & 0.81   &  0.85  &  \textbf{1.98}  &  16.04  &  9.23  &  4.34  &  \textbf{9.89}  &   42.22  &  51.94  &  10.59 &428.19 & 360.34 \\
	\end{tabular}

\vspace*{0.35cm}

Perturbation Specific Results

\begin{tabular}{l|cccc|cccc|cccc}
	
	&\multicolumn{4}{c}{$k$-means}&\multicolumn{4}{c}{HAC}&\multicolumn{4}{c}{DBSCAN}\\
	\hline
	& {\tiny $\min$-$\min$\par} & {\tiny $\min$-$\max$\par} & {\tiny $\max$-$\min$\par} & {\tiny $\max$-$\max$\par} & {\tiny $\min$-$\min$\par} & {\tiny $\min$-$\max$\par} & {\tiny $\max$-$\min$\par} & {\tiny $\max$-$\max$\par} & {\tiny $\min$-$\min$\par} & {\tiny $\min$-$\max$\par} & {\tiny $\max$-$\min$\par} & {\tiny $\max$-$\max$\par} \\ 
	\hline
	Average $F_1$ & \textbf{0.45} & 0.11 & 0.33 & 0.33 & 0.55 & 0.54 & 0.35 & \textbf{0.57} & 0.33 & \textbf{0.59} & 0.29 & \textbf{0.59} \\ 
	Average Runtime & 0.81 & 0.81 & 0.81 & 0.81 & 9.22 & 9.24 & 9.23 & 9.23 & 51.89 & 51.90 & 51.64 & 51.94 \\
\end{tabular}
	\label{tab:results}
\end{table*}

\subsubsection{Detailed Results per Perturbation Method}\label{sec:perturbation_results}
In the bottom of Table~\ref{tab:results}, average $F_1$-scores and runtimes of all four considered perturbation strategies are listed per clustering.
In terms of average $F_1$-scores, the $\max$-$\max$ perturbation scored highest most often, whereas differences in runtime between perturbation strategies are negligible.
For this reason and due to lack of space, only the detailed scores per dataset of CP with $\max$-$\max$ perturbations are listed in Table~\ref{tab:results}.

\section{Discussion}\label{sec:discussion}
The results of the case study indicate that the $\max$-$\max$ perturbation is slightly superior over the other considered perturbation strategies.
This is in accordance with the results of the competitive evaluation, and hence we overall argue that $\max$-$\max$ perturbations should be preferred.

In the benchmark evaluation, the parameter-free variant of Cluster Purging seems to be competitive with other detectors, yet does not demonstrate superior detection performances.
However, this lack of superiority may be tolerable when one considers that a parameter-free algorithm was compared against parametric ones---where CBLOF, the strongest competitor, received information on how many outliers are present in the dataset.
Of course, one may argue that Cluster Purging is not truly parameter-free if only a single clustering is provided, since the selected perturbation strategy can also be seen as a parameter.
Yet, when one considers that multiple different perturbation strategies may lead to similar detection results (cf. Table~\ref{tab:methods} $\min$-$\max$ and $\max$-$\max$), then it can be argued that Cluster Purging is still ``less'' parameter-dependent than other competing methods.
Further, if a single parameter is allowed (rate-distortion hull slope $\kappa$), then one can use the parametric variant of Cluster Purging, which overall seems to compete strongly against the state-of-the-art.
The slow runtime of the seemingly efficient method LOF can be explained by the need of computing up to $n-1$ nearest neighbors during parameter optimization.

It is also noteworthy that Cluster Purging---especially its parametric variant---performed (or was tied for) best on high-dimensional and outlier heavy datasets \textrm{Arrhymthia}, \textrm{Heart}, \textrm{Pima} and \textrm{Ionosphere}.
Hence, one can expect Cluster Purging to tolerate high-dimensional data or high outlier ratios even if clustering such data is challenging.

Consequently, we expect Cluster Purging to perform well in a variety of domains under the premise that a reasonably-working clustering technique is known.
Further, our proposed algorithms, especially the parametric variant, are efficient in terms of computational complexity, requiring only $\mathcal{O}(n)$ time.
While at least one clustering is still required as input, this efficiency can be a key advance in scenarios where prior clusterings of the data are available.

\appendices
\ifCLASSOPTIONcompsoc
  \section*{Acknowledgments}
\else
  \section*{Acknowledgment}
\fi

We thank the anonymous reviewers for their valuable feedback.
This work was partly funded by the iDev40 project.
The iDev40 project  has  received  funding  from  the  ECSEL  Joint  Undertaking (JU) under  grant  agreement  No  783163. The JU receives support from the European Union's Horizon  2020  research  and  innovation  programme. It  is  co-funded  by  the  consortium  members, grants from Austria, Germany, Belgium, Italy, Spain and Romania.

\ifCLASSOPTIONcaptionsoff
  \newpage
\fi

\bibliographystyle{IEEEtran}
\bibliography{final_manuscript}

\begin{IEEEbiography}[{\includegraphics[width=1in,height=1.25in,clip,keepaspectratio]{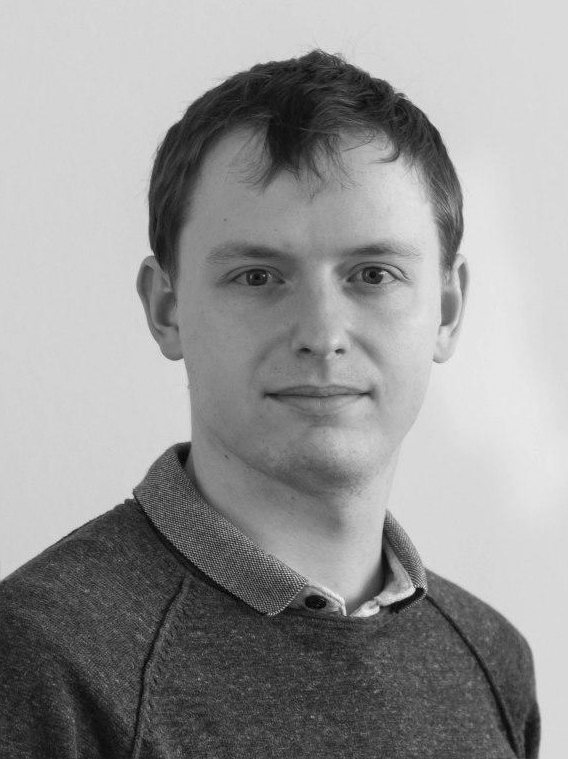}}]{Maximilian B. Toller}
is a PhD candidate at Graz University of Technology, Austria and is currently a researcher at Know-Center GmbH, Graz, Austria.
His research interests include outlier detection, time series data mining, theoretical foundations of data mining, and computational complexity theory.
\end{IEEEbiography}

\begin{IEEEbiography}
[{\includegraphics[width=1in,height=1.25in,clip,keepaspectratio]{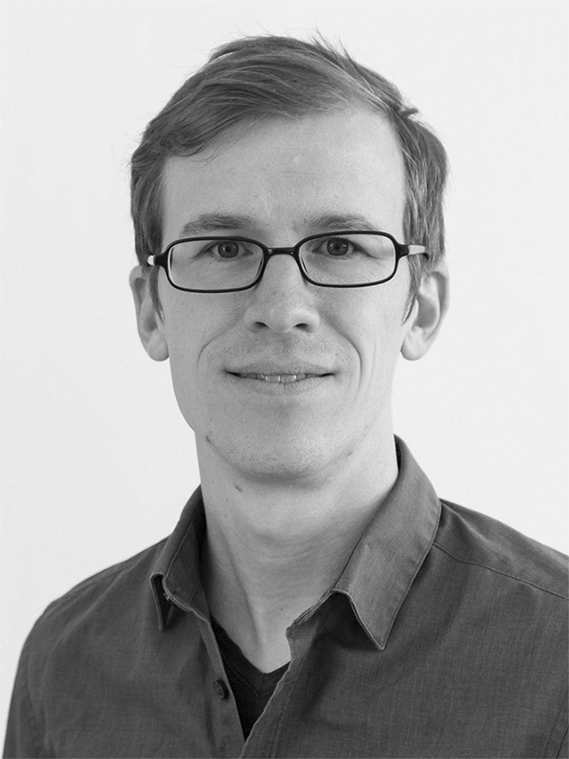}}]{Bernhard C. Geiger}
(S'07, M'14, SM'19) received the Dipl.-Ing. degree in electrical engineering (with distinction) and the Dr. techn. degree in electrical and information engineering (with distinction) from Graz University of Technology, Austria, in 2009 and 2014, respectively.

In 2009 he joined the Signal Processing and Speech Communication Laboratory, Graz University of Technology, as a Project Assistant and took a position as a Research and Teaching Associate at the same lab in 2010. He was a Senior Scientist and Erwin Schr\"odinger Fellow at the Institute for Communications Engineering, Technical University of Munich, Germany from 2014 to 2017 and a postdoctoral researcher at the Signal Processing and Speech Communication Laboratory, Graz University of Technology, Austria from 2017 to 2018. He is currently a Senior Researcher at Know-Center GmbH, Graz, Austria. His research interests cover information theory for machine learning, theory-assisted machine learning, and information-theoretic model reduction for Markov chains and hidden Markov models.
\end{IEEEbiography}

\begin{IEEEbiography}[{\includegraphics[width=1in,height=1.25in,clip,keepaspectratio]{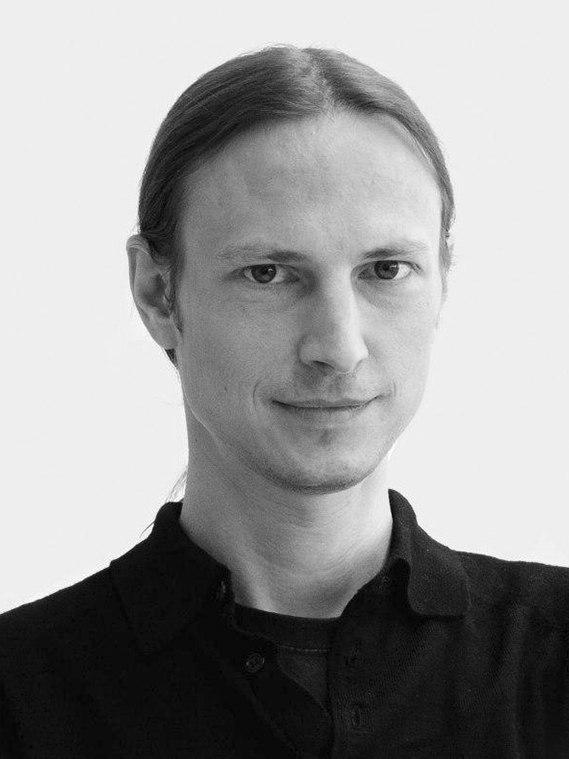}}]
	{Roman Kern}
	 is an Ass.Prof. at the Institute for Interactive Systems and Data Science at the Technical University of Graz and head of Knowledge Discovery at the Know-Center (competence centre for Big Data analytics and data-driven business). His research interest include Natural Language Processing, Machine Learning, with a focus on Data Science and Big Data Analytics. He applies these methods in fields like Scientific Publication Mining, Intelligent Transportation Systems, and Smart Production.
\end{IEEEbiography}

\end{document}